\theoremstyle{plain}
\newtheorem{theorem}{Theorem}[section]
\newtheorem{proposition}[theorem]{Proposition}
\newtheorem{lemma}[theorem]{Lemma}
\newtheorem{corollary}[theorem]{Corollary}
\theoremstyle{definition}
\newtheorem{definition}[theorem]{Definition}
\theoremstyle{remark}
\newcommand{\cB}{\mathcal{B}}
\newcommand{\cH}{\mathcal{H}}
\newcommand{\cN}{\mathcal{N}}
\newcommand{\cO}{\mathcal{O}}
\newcommand{\cP}{\mathcal{P}}
\newcommand{\cR}{\mathcal{R}}
\newcommand{\cU}{\mathcal{U}}
\newcommand{\cX}{\mathcal{X}}
\newcommand{\cY}{\mathcal{Y}}
\newcommand{\cZ}{\mathcal{Z}}
\newcommand{\C}{\mathbb{C}}
\newcommand{\R}{\mathbb{R}}
\newcommand{\bE}{\mathbb{E}}
\newcommand{\bV}{\mathbb{V}}
\newcommand{\bP}{\mathbb{P}}
\newcommand{\Id}{\mathds{1}}
\newcommand{\pvec}{\mathbf{p}}
\newcommand{\evec}{\mathbf{e}}
\newcommand{\AUC}{\mathrm{AUC}}
\newcommand{\phat}{\hat{p}}
\newcommand{\abs}[1]{\lvert#1\rvert}
\newcommand{\norm}[2]{\left\|#1\right\|_{#2}}
\def\norm{\@ifnextchar[{\@with}{\@without}}
\def\@with[#1]#2{\left\|#2\right\|_{#1}}
\def\@without#1{\left\|#1\right\|}
\newcommand{\secref}[1]{Sec.~\ref{#1}} 
\icmltitlerunning{Certifying Out-of-Domain Generalization for Blackbox Functions}
\begin{document}

\twocolumn[
\icmltitle{Certifying Out-of-Domain Generalization for Blackbox Functions}



\icmlsetsymbol{equal}{*}

\begin{icmlauthorlist}
\icmlauthor{Maurice Weber}{eth}
\icmlauthor{Linyi Li}{uiuc}
\icmlauthor{Boxin Wang}{uiuc}
\icmlauthor{Zhikuan Zhao}{eth}
\icmlauthor{Bo Li}{uiuc}
\icmlauthor{Ce Zhang}{eth}
\end{icmlauthorlist}

\icmlaffiliation{eth}{Department of Computer Science, ETH Zurich}
\icmlaffiliation{uiuc}{UIUC, USA}

\icmlcorrespondingauthor{Maurice Weber}{maurice.weber@inf.ethz.ch}
\icmlcorrespondingauthor{Ce Zhang}{ce.zhang@inf.ethz.ch}

\icmlkeywords{Machine Learning, ICML}

\vskip 0.3in
]


\printAffiliationsAndNotice{}  

\begin{abstract}
Certifying the robustness of
model performance under bounded
data distribution drifts has recently
attracted intensive interest under
the umbrella of \textit{distributional robustness}.
However, existing techniques either make strong assumptions on the model class and loss functions that can be certified, such as smoothness expressed via Lipschitz continuity of gradients, or require to solve complex optimization problems.
As a result, the wider application of these
techniques is currently
limited by its \textit{scalability} and
\textit{flexibility} --- these techniques often do not scale to large-scale datasets with modern deep neural networks or cannot handle loss functions which may be non-smooth such as the 0-1 loss.
In this paper, we focus on the problem of
certifying distributional robustness
for \textit{blackbox models and
bounded loss functions}, and propose a novel certification framework based on the Hellinger distance.
Our certification technique scales to ImageNet-scale datasets, complex models,
and a diverse set of loss functions.
We then focus on one specific application
enabled by such scalability and flexibility, i.e.,
certifying out-of-domain generalization
for large neural networks and
loss functions such as accuracy and
AUC.
We experimentally validate our certification method on a number of datasets, ranging from ImageNet, where we provide the first non-vacuous certified out-of-domain generalization,
to smaller classification tasks where we are able to compare with the state-of-the-art and show that our method performs considerably better.
\end{abstract}

\vspace{-0.75em}
\section{Introduction}
\label{sec:introduction}
\vspace{-0.25em}
The wide application of machine
learning models in the real world
brings an emerging
challenge of understanding the
performance of a machine learning
model under different data
distributions ---
ML systems operating autonomous vehicles which are trained based on data collected in the northern hemisphere might fail when deployed in desert-like environments or under different weather conditions~\cite{volk2019towards,dai2018dark},
while recognition systems have been shown to fail when deployed in new environments~\cite{beery2018recognition}.
Similar concerns also apply to many  mission-critical applications
such as medicine and cyber-security~\cite{koh2021wilds,albadawy2018deep,gulrajani2021in}.
In all these applications,
it is imperative to have a sound understanding of the model's robustness and possible failure cases in the presence of a shift in the data distribution, and to have corresponding guarantees on the performance.

Recently, this problem has attracted
intensive interest under the umbrella of \textit{distributional robustness}~\cite{scarf1958min,ben2013robust,gao2016distributionally,kuhn2019wasserstein,blanchet2019quantifying,duchi2021statistics}. Specifically,
let $P$ be a joint data distribution over
features $X \in \mathcal{X}$ and labels $Y \in \mathcal{Y}$,
and let $h_\theta: \mathcal{X} \to \mathcal{Y}$ be a
machine learning model parameterized by $\theta$. For a loss function
$\ell: \mathcal{Y} \times \mathcal{Y} \to \mathbb{R}$, we hope to compute
\begin{equation}
    \label{eq:worst-case-risk}
    \cR_\theta(\cU_P) := \sup_{Q \in \cU_P}\bE_{(X, Y)\sim Q}[\ell(h_\theta(X), Y)]
\end{equation}
where $\cU_P \subseteq \cP(\cZ)$ is a set of probability distributions on $\cZ$, called the \textit{uncertainty set}.
Intuitively, this
measures the \textit{worst-case risk}
of $h_\theta$ when the data
distribution drifts from $P$
to another distribution
in $\cU_P$.

Providing a technical solution to
this problem has gained increased attention
over the years, as summarized in Table~\ref{tab:landscape}.
However,
most --- if not all --- existing approaches,
impose strong constraints such as
bounded Lipschitz gradients on
both $h$ and $\ell$ and rely on
expensive
certification methods such as
direct minimax optimization. As a result,
these methods have been applied only to small-scale datasets
and ML models.

\begin{table*}[t!]
\small
\centering
\begin{tabular}{c c c c c c c}
\toprule
Ref. & Assumptions on $\ell$ & Assumption on $h$ & Distance & Largest Dataset \\
\midrule
~\cite{gao2016distributionally} & \multicolumn{2}{c}{Generalised Lipschitz Continuity} & Wasserstein & -- \\
~\cite{sinha2017certifying} & Bounded, Smoothness & Smoothness & Wasserstein & MNIST\\
~\cite{staib2019distributionally} & Bounded, Continuous & Kernel Methods & MMD & --\\
~\cite{shafieezadeh2019regularization} & \multicolumn{2}{c}{Lipschitz Continuity} & Wasserstein & -- \\
~\cite{blanchet2019quantifying} & Bounded, Smoothness & Smoothness & Wasserstein & -- \\
~\cite{cranko2021generalised} & \multicolumn{2}{c}{Generalised Lipschitz Continuity} & Wasserstein & -- \\
\midrule
Our Method & Bounded & \textit{any} Blackbox & Hellinger & ImageNet \\
\bottomrule
\end{tabular}
\caption{Current landscape of certified distributional robustness.}
\vspace{-1em}
\label{tab:landscape}
\end{table*}

In this paper, we consider the case
that both \textit{$h$ and $\ell$ can be non-convex and non-smooth} ---
$h$ can be a full-fledged neural network,
e.g., ImageNet-scale EfficientNet-B7~\cite{tan2019efficientnet}, and $\ell$
can be a general non-smooth loss function such as
the 0-1 loss. We provide, to our best knowledge,
the first practical method
for blackbox functions that scales to real-world, ImageNet-scale neural networks
and datasets. Our key
innovation is a novel algorithmic
framework that arises from bounding inner
products between elements of a suitable Hilbert space.
Specifically, we can
characterize the upper bound
of the performance of $h$ on any $Q$ within the uncertainty set
as a function of the Hellinger distance,
a specific type of $f$-divergence,
and the expectation and variance
of the loss of $h$ on $P$.

We then apply our
framework to the
problem of
certifying the out-of-domain generalization performance
of a given classifier,
taking advantage of its
scalability and flexibility.
Specifically, let $P$ be the
\textit{in-domain distribution},
and $h_\theta$ a classifier.
Then, to reason about the performance of $h_\theta$ on shifted distributions $Q$, we provide a certificate
in the following form:
\begin{equation}
    \begin{aligned}
        \forall Q\colon \, &\texttt{dist}(Q, P) \leq \rho \\
        &\implies \, \bE_{(X, Y)\sim Q}[\ell(h(X), Y)] \leq C_\ell(\rho,\,P)
    \end{aligned}
\end{equation}
where $C_\ell$ is a bound which depends on the distance $\rho$ and the distribution $P$.
This requires several nontrivial
instantiations of our framework with careful practical considerations.
To this end,
we first develop a certification algorithm
that relies only on a finite set of
samples from the in-domain distribution $P$.
Moreover, we also instantiate
it with different domain drifting models
such as label drifting and covariate drifting,
connecting the general Hellinger distance
to the degree of domain drifting specific
to these scenarios.
We then consider a diverse
range of loss functions,
including JSD loss, 0-1 loss,
and AUC.
To the best of our knowledge,
we provide the first certificate
for such diverse realistic scenarios, which is able to scale to large problems.

Last but not least, we conduct intensive
experiments verifying the efficiency
and effectiveness of our result.
Our method is able to scale to
datasets and neural networks as large as
ImageNet and full-fledged models like EfficientNet-B7
and BERT.
We further apply our method
on smaller-scale datasets, in order to compare with strong, state-of-the-art
methods. We show that our method provides
much tighter certificates.

Our contributions can be summarized
as follows:
\begin{itemize}
\vspace{-0.5em}
\item We present a novel framework which provides a non-vacuous, computationally tractable bound to the distributionally robust worst-case risk $\cR_\theta(\cU_P)$ for general bounded loss functions $\ell$ and models $h$.
\vspace{-0.5em}
\item We apply this framework to the problem of certifying out-of-domain generalization for blackbox functions and provide a means to certify distributional robustness in specific scenarios such as label and covariate drifts.
\vspace{-0.5em}
\item We provide an extensive experimental study of our approach on a wide range of datasets including the large scale ImageNet~\cite{russakovsky2015imagenet} dataset, as well as NLP datasets with complex models.
\end{itemize}

\section{Distributional Robustness
for Blackbox Functions}
\label{sec:gramian}
In this section, we present our main results, namely, a computationally tractable upper bound to the worst-case risk~\eqref{eq:worst-case-risk} for uncertainty sets expressed in terms of Hellinger balls around the data-generating distribution $P$.
The technique is based on the non-negativity of Gram matrices which, by expressing expectation values as inner products between elements of a suitable Hilbert space, can be leveraged to relate expectation values of a blackbox function under different probability distributions $P$ and $Q$.\footnote{The idea behind our methods
is inspired by how Gram matrices are used in quantum chemistry~\cite{weinhold1968lower,weber2021toward} to bound expectation values of quantum observables. However, the adaptation to machine learning is nontrivial and requires careful analysis.}
We describe the underlying technique leading to our main result in Theorem~\ref{thm:main}, which upper bounds the worst-case population loss using both the expectation and variance.

For the remainder of this section, to simplify notation and maintain generality, we consider generic loss functions $\ell\colon\cZ\to\R_+$ which
contain the model $h$ and take inputs from a generic input space $\cZ$. For example in the context of supervised learning, $\cZ = \cX \times \cY$ can be the product space of features and labels and the loss $\ell(z) = \tilde{\ell}(h_\theta(x),\,y)$ can be seen as a composition of the loss function $\Tilde{\ell}$ and the model $h_\theta$.
We denote the set of probability measures on the space $\cZ$ by $\cP(\cZ)$.
For two measures $\mu,\,\nu$ on $\cZ$, we say that $\nu$ is absolutely continuous with respect to $\mu$, denoted $\nu\ll\mu$ if $\mu(A) = 0$ implies that $\nu(A) = 0$ for any measurable set $A\subseteq \cZ$.
Among the plethora of distances between probability measures, such as total variation and Wasserstein distance, a particularly popular choice is the family of $f$-divergences which has been extensively studied in the context of distributionally robust optimization~\cite{ben2013robust,lam2016robust,duchi2019variance,duchi2021statistics}. In this paper, we focus on the Hellinger distance, which is a particular type of $f$ divergence.
\begin{definition}[Hellinger-distance]
    Let $P,\,Q\in\cP(\cZ)$ be probability measures on $\cZ$ that are absolutely continuous with respect to a reference measure $\mu$ with $P,\,Q\ll\mu$.
    The Hellinger distance between $P$ and $Q$ is defined as
    \begin{equation}
        H(P,\,Q) := \sqrt{\frac{1}{2}\int_{\cZ}\left(\sqrt{p(z)} - \sqrt{q(z)}\right)^2\,d\mu(z)}
    \end{equation}
    where $p = \frac{dP}{d\mu}$ and $q = \frac{dQ}{d\mu}$ are the Radon-Nikodym derivatives of $P$ and $Q$ with respect to $\mu$. The Hellinger distance is independent of the choice of the reference measure $\mu$.
\end{definition}
The Hellinger distance is bounded with values in $[0,\,1]$, with $H(P,\,Q) = 0$ if and only if $P = Q$ and the maximum value of $1$ attained when $P$ and $Q$ have disjoint support. Furthermore, $H$ defines a metric on the space of probability measures and hence satisfies the triangle inequality.
We will now show how the Hellinger distance can be expressed in terms of an inner product between elements of a suitable Hilbert space, which ultimately enables us to use the theory of Gram matrices to derive an upper bound on the worst-case population risk~\eqref{eq:worst-case-risk} for uncertainty sets given by Hellinger balls.
Consider the Hilbert space $L_2(\cZ,\,\Sigma,\,\mu)$\footnote{We take $\Sigma$ to be the Borel $\sigma$-algebra on $\cZ$, being the smallest $\sigma$-algebra containing all open sets on $\cZ$.} of square-integrable functions $f\colon\cZ \to \R$, endowed with the inner product $\langle f,\,g\rangle = \int_{\cZ} fg\,d\mu$.
Within this space, we can identify any probability distribution $P \ll \mu$ with a unit vector $\psi_P \in L_2(\cZ,\,\Sigma,\,\mu)$ via the square root of its Radon-Nikodym derivative $\psi_P := \sqrt{dP / d\mu}$.
This mapping enables us to write the Hellinger distance and, more generally, expectation values in terms of inner products. To see this, note that for any two probability measures $P,\,Q$ on $\cZ$, it holds that
\begin{equation}
    \langle \psi_P,\,\psi_Q\rangle = \int_\cZ \sqrt{dP}\sqrt{dQ} = 1 - H^2(P,\,Q)
\end{equation}
and similarly, for any essentially bounded function $f\in L_\infty$, we have
\begin{equation}
    \bE_{P}[f(Z)] = \int_{\cZ} f(z)\,dP(z) = \langle \psi_P,\,f\cdot\psi_P\rangle
\end{equation}
where the product $(f\cdot\psi_P)(z) = f(z)\cdot\psi_P(z)$ is to be understood as pointwise multiplication\footnote{More precisely, every $f\in L_\infty(\cZ,\,\Sigma,\,\mu)$ defines a bounded linear operator $M_f\colon L_2 \to L_2$ acting on elements of $L_2$ via pointwise multiplication, $g \mapsto M_f(g) := f \cdot g$ with $(f\cdot g)(z) = f(z)\cdot g(z)$ for any $z\in\cZ$.}.
For $f\in L_\infty$, consider the Gram matrix of the Hilbert space elements $\psi_Q$, $\psi_P$ and $f\cdot\psi_P$, defined as
\begin{equation}
    G :=
    \begin{pmatrix}
        1 & \langle \psi_Q,\,\psi_P\rangle & \langle \psi_Q,\,f\psi_P\rangle\\
        \langle \psi_Q,\,\psi_P\rangle & 1 & \langle \psi_P,\,f\psi_P\rangle\\
        \langle f \psi_P,\,\psi_Q\rangle & \langle f\psi_P,\,\psi_P\rangle & \langle f\psi_P,\,f\psi_P\rangle
    \end{pmatrix}.
\end{equation}
The crucial observation is that $G$ is positive semidefinite and thus has a non-negative determinant which can be viewed as a second degree polynomial $\pi(x)$ evaluated at $x=\langle\psi_Q,\,f\psi_P\rangle$ and is given by
\begin{equation}
    \mathrm{det}(G) =: \pi(x)\rvert_{x=\langle\psi_Q,\,f\psi_P\rangle}
\end{equation}
where $\pi(x) = ax^2 + bx + c$ is a polynomial with coefficients
\begin{align}
    \begin{gathered}
        a =-1,\hspace{1em}b=2\cdot\langle\psi_P,\,\psi_Q\rangle\cdot\langle\psi_P,\,f\psi_P\rangle\\
        c = (1-\abs{\langle\psi_P,\,\psi_Q\rangle}^2)\langle f\psi_P,\,f\psi_P\rangle - \langle f\psi_P,\,\psi_P\rangle^2.
    \end{gathered}
\end{align}
The non-negativity of $\mathrm{det}(G)$ implies that $\pi(x=\langle\psi_Q,\,f\psi_P\rangle) \geq 0$ and thus effectively restricts the values which $\langle \psi_Q,\,f\psi_P\rangle$ can take to be bounded within the square roots of $\pi$ so that
\begin{equation}
    \frac{b}{2} - \sqrt{\frac{b^2}{4} + c} \leq \langle \psi_Q,\,f\psi_P\rangle \leq \frac{b}{2} + \sqrt{\frac{b^2}{4} + c}.
\end{equation}
For positive functions $f \geq 0$, we can upper bound $\langle \psi_Q,\,f\psi_P\rangle$ via the Cauchy-Schwarz inequality and obtain a \emph{lower} bound on the expectation of $f$ under $Q$. Taking as our function $f$ to be the loss function of interest $f := \ell$, under the assumption that $\sup_{z\in Z}\abs{\ell(z)} \leq M$ for some $M > 0$, we can finally recast this lower bound as an upper bound on the expectation of $\ell$ under $Q$. Taking the supremum with respect to $Q$ leads to a bound on the worst-case risk~\eqref{eq:worst-case-risk}. We remark that in this way, we obtain both lower and upper bounds on the expected loss. As we will show, these bounds can be used to bound useful statistics, such as the accuracy or the AUC score used in binary classification.
In the following Theorem, we state our main result as an upper bound to the worst-case risk~\eqref{eq:worst-case-risk} and refer the reader to Appendix~\ref{apx:proof-lower-bound} for the analogous lower bound.
\begin{theorem}
    \label{thm:main}
    Let $\ell\colon\cZ \to \R_+$ be a loss function and suppose that $\sup_{z\in\cZ}\abs{\ell(z)} \leq M$ for some $M > 0$. Then, for any probability measure $P$ on $\cZ$ and $\rho > 0$ we have
    \begin{equation}
        \label{eq:main-generalization-bound}
        \begin{aligned}
            &\hspace{-.9em}\sup_{Q \in B_\rho(P)} \bE_{Q}[\ell(Z)] \leq \bE_{P}[\ell(Z)] + 2C_\rho\sqrt{\bV_{P}[\ell(Z)]} \\
            & + \rho^2(2-\rho^2)\bigg[M - \bE_{P}[\ell(Z)] - \frac{\bV_{P}[\ell(Z)]}{M - \bE_{P}[\ell(Z)]}\bigg]
        \end{aligned}
    \end{equation}
    where $C_\rho = \sqrt{\rho^2(1-\rho^2)^2(2-\rho^2)}$ and $B_\rho(P) = \{Q \in \cP(\cZ)\colon \, H(P,\,Q) \leq \rho\}$ is the Hellinger ball of radius $\rho$ centered at $P$. The radius $\rho$ is required to be small enough such that
    \begin{equation}
        \rho^2 \leq 1 - \left[1 + \frac{(M - \bE_{P}[\ell(Z)])^2}{\bV_{P}[\ell(Z)]}\right]^{-1/2}.
    \end{equation}
\end{theorem}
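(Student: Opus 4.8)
The plan is to convert the desired \emph{upper} bound on $\sup_{Q\in B_\rho(P)}\bE_Q[\ell]$ into a \emph{lower} bound for the complementary loss and then feed it through the Gram-matrix argument set up above. Put $g := M - \ell$; by the assumption $\sup_z\abs{\ell(z)}\le M$ this is a nonnegative, essentially bounded function with $\bE_Q[\ell(Z)] = M - \bE_Q[g(Z)]$, so $\sup_Q\bE_Q[\ell] = M - \inf_Q\bE_Q[g]$ and it suffices to lower bound $\bE_Q[g(Z)]$ uniformly over $Q\in B_\rho(P)$. We may assume $\bE_P[\ell] < M$ (hence $\bE_P[g] > 0$), since $\bE_P[\ell] = M$ would force $\ell\equiv M$ $P$-a.s.\ and make the statement degenerate; note also that the hypothesis on $\rho$ presupposes $\bV_P[\ell] > 0$.

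First I would instantiate the construction of this section with $f := g$. Positive semidefiniteness of the Gram matrix of $\psi_Q,\psi_P,g\psi_P$ forces $\langle\psi_Q,g\psi_P\rangle$ to lie between the roots of the downward parabola $\pi$, so in particular $\langle\psi_Q,g\psi_P\rangle \ge x_- := \tfrac{b}{2} - \sqrt{\tfrac{b^2}{4}+c}$. Writing $t := \langle\psi_P,\psi_Q\rangle = 1 - H^2(P,Q)$ and using $\langle\psi_P,g\psi_P\rangle = \bE_P[g]$, $\langle g\psi_P,g\psi_P\rangle = \bE_P[g^2]$, the coefficients collapse to $\tfrac{b}{2} = t\,\bE_P[g]$ and
\begin{equation}
    \tfrac{b^2}{4}+c \;=\; t^2\bE_P[g]^2 + (1-t^2)\bE_P[g^2] - \bE_P[g]^2 \;=\; (1-t^2)\,\bV_P[g],
\end{equation}
so $x_- = t\,\bE_P[g] - \sqrt{(1-t^2)\,\bV_P[g]}$. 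Next I would check that $x_-\ge 0$ for every admissible $Q$: this is equivalent to $c\le 0$, i.e.\ $1 - t^2 \le \bE_P[g]^2/\bE_P[g^2]$; since $t\mapsto 1-t^2$ decreases and $Q\in B_\rho(P)$ gives $t\ge 1-\rho^2$, the extreme case is $t = 1-\rho^2$, where $1-t^2 = \rho^2(2-\rho^2)$, and substituting $\bE_P[g^2] = \bV_P[\ell] + (M-\bE_P[\ell])^2$ shows the stated bound on $\rho^2$ is \emph{precisely} $\rho^2(2-\rho^2)\le \bE_P[g]^2/\bE_P[g^2]$.

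Then I would combine with Cauchy--Schwarz in $L_2(\cZ,\Sigma,\mu)$: since $g\ge 0$, $\langle\psi_Q,g\psi_P\rangle = \langle\sqrt{g}\,\psi_Q,\sqrt{g}\,\psi_P\rangle$, whence $\langle\psi_Q,g\psi_P\rangle^2 \le \norm{\sqrt{g}\psi_Q}^2\norm{\sqrt{g}\psi_P}^2 = \bE_Q[g]\,\bE_P[g]$, giving $\bE_Q[g]\ge x_-^2/\bE_P[g]$. It remains to minimize the right-hand side over $t\in[1-\rho^2,1]$: the map $x_-(t) = t\bE_P[g] - \sqrt{(1-t^2)\bV_P[g]}$ has derivative $\bE_P[g] + t\sqrt{\bV_P[g]}/\sqrt{1-t^2} > 0$, so it is increasing, and being nonnegative its square is minimized at $t = 1-\rho^2$. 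Hence $\inf_Q\bE_Q[g] \ge x_-(1-\rho^2)^2/\bE_P[g]$; expanding the square, dividing by $\bE_P[g]$, and substituting $\bE_P[g] = M - \bE_P[\ell]$, $\bV_P[g] = \bV_P[\ell]$, $1-(1-\rho^2)^2 = \rho^2(2-\rho^2)$, and $(1-\rho^2)^2\rho^2(2-\rho^2) = C_\rho^2$ yields
\begin{equation}
    \inf_Q\bE_Q[g] \;\ge\; (1-\rho^2)^2(M-\bE_P[\ell]) - 2C_\rho\sqrt{\bV_P[\ell]} + \frac{\rho^2(2-\rho^2)\,\bV_P[\ell]}{M-\bE_P[\ell]},
\end{equation}
and finally $\sup_Q\bE_Q[\ell] = M - \inf_Q\bE_Q[g]$ together with $M - (1-\rho^2)^2(M-\bE_P[\ell]) = \bE_P[\ell] + \rho^2(2-\rho^2)(M-\bE_P[\ell])$ reproduces exactly \eqref{eq:main-generalization-bound}.

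The step I expect to be the main obstacle is the bookkeeping around the identity $\tfrac{b^2}{4}+c = (1-t^2)\bV_P[g]$ and recognizing that the smallness condition on $\rho$ is exactly what guarantees $x_-\ge 0$ at the worst value $t=1-\rho^2$ — without nonnegativity, squaring the bound on $\langle\psi_Q,g\psi_P\rangle$ is not legitimate. The one genuinely conceptual move is the initial substitution $g = M-\ell$, which turns the lower-bound side of the Gram inequality into the desired upper bound; after that, Cauchy--Schwarz and the monotonicity of $x_-(t)$ in $t$ are routine.
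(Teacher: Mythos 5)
Your proof is correct and follows essentially the same route as the paper's: positive semidefiniteness of the Gram matrix of $\psi_Q,\,\psi_P,\,(M-\ell)\psi_P$ combined with a Cauchy--Schwarz step through $\sqrt{M-\ell}$ (the paper's operator square root $A^{1/2}$ in its auxiliary lemma), followed by substituting the Hellinger affinity $t = 1-\rho^2$. Your explicit monotonicity argument in $t$ over $[1-\rho^2,1]$ and the check that the radius condition is exactly the nonnegativity of the lower root before squaring correspond to (and make slightly more explicit) the paper's overlap condition and its direct substitution of the worst-case affinity.
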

We refer the reader to Appendix~\ref{apx:proof-main-thm} for a full proof and now make some general observations about this result.
The bound~\eqref{eq:main-generalization-bound} presents a \emph{pointwise} guarantee in the sense that it upper bounds the distributional worst-case risk for a particular model $\ell(\cdot)$.
This is in contrast to bounds which hold uniformly for an entire model class and introduce complexity measures such as covering numbers and VC-dimension which are hard to compute for many
practical problems.
Other techniques which yield a pointwise robustness certificate of the form~(\ref{eq:main-generalization-bound}), typically express the uncertainty set as a Wasserstein ball around the distribution $P$~\cite{sinha2017certifying,shafieezadeh2019regularization,blanchet2019quantifying,cranko2021generalised}, and require the model $\ell$ to be sufficiently smooth. For example, the certificate presented in~\cite{sinha2017certifying} can only be tractably computed for small neural networks for which one can upper bound their smoothness by bounding the Lipschitz constant of their gradients. For more general and large-scale neural networks, these bounds quickly become intractable and/or lead to vacuous certificates.
For example, it is known that computing the Lipschitz constant of neural networks with ReLU activations is NP-hard~\cite{virmaux2018lipschitz}.
Secondly, we emphasize that our bound~\eqref{eq:main-generalization-bound} is ``faithful'', in the sense that, as the radius approaches zero, $\rho \to 0$, the bound converges towards the true expectation $\bE_P[\ell(Z)]$.
This is of course desirable for any such bound as it indicates that any intrinsic gap vanishes as the covered distributions become increasingly closer to the reference distribution $P$.
A third observation is that the bound~\eqref{eq:main-generalization-bound} is \emph{monotonically increasing} in the variance, indicating that low-variance models exhibit better generalization properties, which can be seen in light of the bias-variance trade-off.
More specifically, from the form our bound~\eqref{eq:main-generalization-bound} takes, we see that minimizing the variance-regularized objective $\mathcal{L}(\theta) = \bE_{Z\sim P}[\ell_\theta(Z)] + \lambda \bV_{Z\sim P}{\ell_\theta(Z)}$, effectively amounts to minimizing an upper bound on the worst-case risk.
Indeed, various recent works have highlighted the connection between variance regularization and generalization~\cite{lam2016robust, maurer2009empirical,gotoh2018robust,duchi2019variance} and our result provides further evidence for this observation.

\section{Certifying Out-of-domain Generalization}
Taking advantage of our weak assumptions on the loss functions and models, we now apply our framework to the problem of certifying
the out-of-domain generalization
performance of a given classifier, when measured in terms of different loss functions.
In practice, one is typically only given a finite sample $Z_1,\,\ldots,\,Z_n$ from the in-domain distribution $P$ and the bound~\eqref{eq:main-generalization-bound} needs to be estimated empirically.
To address this problem, our next step is to present a finite sampling version of the bound~\eqref{eq:main-generalization-bound} which holds with arbitrarily high probability over the distribution $P$.
Second, we instantiate our results with specific distribution shifts, namely, shifts in the label distribution, and shifts which only affect the covariates. Finally, we highlight specific loss and score functions and show how our result can be applied to certify the out-of-domain generalization of these functions.

\subsection{Finite Sample Results}
\label{subsec:finite-sampling}
Let $Z_1,\,\ldots,\,Z_n  \stackrel{iid}{\sim} P$ be an independent and identically distributed sample from the in-domain distribution $P$.
One immediate way to use our bound would be to construct the empirical distribution $\hat{P}_n$ and consider the worst-case risk over distributions $Q\in B_\rho(\hat{P}_n)$, while computing the bound on the right hand side of~\eqref{eq:main-generalization-bound} with the empirical mean and unbiased sample variance.
However, for $\rho < 1$, the Hellinger ball $B_\rho(\hat{P}_n)$ will in general only contain distributions with discrete support since any continuous distribution $Q$ has distance $1$ from $\hat{P}_n$.
We therefore seek another path and make use of concentration inequalities for the population variance and mean, in order to get statistically sound guarantees which hold with arbitrarily high probability. To achieve this, we bound the expectation value via Hoeffding's inequality~\cite{hoeffding1963probability}, and the population variance via a bound presented in~\cite{maurer2009empirical}. In the second step, we use the union bound as a means to bound both variance and expectation simultaneously with high probability.
We leave the derivation and proof to Appendix~\ref{subsec:finite-sampling}.
These ingredients lead to the finite sampling-based version of Theorem~\ref{thm:main}, which we state in the following Corollary.
\begin{corollary}[Finite-sampling bound]
    \label{cor:finite-sampling-bound}
    Let $Z_1,\,\ldots,\,Z_n$ be independent random variables drawn from $P$ and taking values in $\cZ$. For a loss function $\ell\colon\cZ \to [0,\,M]$, let $\hat{L}_n:=\frac{1}{n}\sum_{i=1}^n\ell(Z_i)$ be the empirical mean and $S_n^2:=\frac{1}{n(n-1)}\sum_{1 \leq i < j \leq n}^n (\ell(Z_i) - \ell(Z_j))^2$ be the unbiased estimator of the variance of the random variable $\ell(Z)$, $Z\sim P$.
    Then, for any $\delta \in (0, 1)$, with probability at least $1 - \delta$
    \begin{equation}
        \label{eq:main-generalization-bound-sampling}
        \begin{aligned}
            &\sup_{Q \in B_\rho(P)} \bE_{Q}[\ell(Z)] \leq \hat{L}_n + 2C_\rho\sqrt{S_n^2} + \Delta_{n,\rho}\\
            &\hspace{2em} + \rho^2(2-\rho^2)\Bigg[M - \hat{L}_n\\
            &\hspace{4em} + \frac{S_n^2 + 2M\sqrt{\frac{2S_n^2\ln 2 /\delta}{n-1}} + \frac{2M^2\ln 2 / \delta}{n-1}}{\hat{L}_n - M \left(1 - \sqrt{\frac{\ln 2 / \delta}{2n}} \right)}\Bigg]
        \end{aligned}
    \end{equation}
    where
    \begin{equation}
        \Delta_{n,\rho} = \left(\frac{2C_\rho}{\sqrt{n-1}} - \frac{\rho^2(2-\rho^2)}{2\sqrt{n}}\right)M\sqrt{2\ln 2/\delta}
    \end{equation}
    and $C_\rho = \sqrt{\rho^2(1-\rho^2)^2(2-\rho^2)}$. The radius $\rho$ is required to be small enough such that
    \begin{equation}
        \rho^2 \leq 1 - \left[1 + \left(\frac{\hat{L}_n - M\left(1 - \sqrt{\frac{\ln 2/\delta}{2n}}\right)}{\sqrt{S_n^2} + M\sqrt{\frac{2\ln 1/\delta}{n - 1}}}\right)^2\right]^{-1/2}.
    \end{equation}
\end{corollary}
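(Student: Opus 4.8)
The plan is to reduce the corollary to Theorem~\ref{thm:main} by replacing the population mean $\mu := \bE_P[\ell(Z)]$ and variance $v := \bV_P[\ell(Z)]$ appearing on the right-hand side of~\eqref{eq:main-generalization-bound} by one-sided high-probability upper confidence bounds, and then exploiting monotonicity of that right-hand side, viewed as a function $F(\mu,v)$, to conclude. It is convenient to name the two deviation scales that will appear, $\epsilon_\mu := M\sqrt{\ln(2/\delta)/(2n)}$ and $\epsilon_v := M\sqrt{2\ln(2/\delta)/(n-1)}$.

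First I would establish the two monotonicity facts. That $F$ is nondecreasing in $v$ is already observed in the discussion after Theorem~\ref{thm:main}. For $\mu$, differentiating gives $\partial_\mu F = 1 - \rho^2(2-\rho^2)\left(1 + v(M-\mu)^{-2}\right)$, and the radius hypothesis $\rho^2 \le 1 - [1 + (M-\mu)^2/v]^{-1/2}$ of Theorem~\ref{thm:main} is, after squaring and rearranging (using $\rho^2(2-\rho^2) + (1-\rho^2)^2 = 1$), exactly equivalent to $\rho^2(2-\rho^2)\left(1 + v(M-\mu)^{-2}\right) \le 1$; hence $\partial_\mu F \ge 0$ on the admissible range. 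This is the one genuinely non-mechanical step, and I expect it to be the main obstacle: one has to massage the radius condition into the derivative inequality, and then check that the radius condition, being hardest to satisfy for the largest value of $(M-\mu)^2/v^{-1}$ it is evaluated at, still holds after the confidence substitution, so that $F$ is in fact nondecreasing on the whole interval between $\mu$ and its confidence bound.

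Next I would apply the two concentration inequalities. One-sided Hoeffding at confidence $1-\delta/2$ yields $\mu \le \hat{L}_n + \epsilon_\mu =: \bar\mu$. The empirical-Bernstein-type deviation bound for the standard deviation from~\cite{maurer2009empirical}, again at confidence $1-\delta/2$, yields $\sqrt{v} \le \sqrt{S_n^2} + \epsilon_v$; squaring gives $v \le \bar v := S_n^2 + 2M\sqrt{2 S_n^2 \ln(2/\delta)/(n-1)} + 2M^2\ln(2/\delta)/(n-1)$, which is exactly the numerator in~\eqref{eq:main-generalization-bound-sampling}. A union bound makes both estimates valid simultaneously with probability at least $1-\delta$; on this event I would also verify that the corollary's empirical radius condition (stated in terms of $\hat{L}_n$ and $S_n^2$) implies the population radius condition of Theorem~\ref{thm:main} at $(\bar\mu,\bar v)$, so the theorem applies, and that $M - \bar\mu > 0$ so the relevant fraction is well defined — this is where the shifted quantity $\hat{L}_n - M(1-\sqrt{\ln(2/\delta)/(2n)}) = -(M-\bar\mu)$ in the denominator originates.

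Finally, on the good event, monotonicity gives $\sup_{Q\in B_\rho(P)}\bE_Q[\ell(Z)] \le F(\mu,v) \le F(\bar\mu,\bar v)$, and it only remains to expand $F(\bar\mu,\bar v)$ and regroup. The affine-in-$\bar\mu$ contribution $\bar\mu + \rho^2(2-\rho^2)(M-\bar\mu)$ equals $(1-\rho^2)^2\bar\mu + \rho^2(2-\rho^2)M$, which rewrites as $\hat{L}_n + \rho^2(2-\rho^2)(M-\hat{L}_n)$ plus a multiple of $\epsilon_\mu$; the term $2C_\rho\sqrt{\bar v}$ splits as $2C_\rho\sqrt{S_n^2} + 2C_\rho\epsilon_v$; and substituting $M-\bar\mu = -(\hat{L}_n - M(1-\sqrt{\ln(2/\delta)/(2n)}))$ into the denominator of the last fraction flips its sign, producing the $+$ sign and the shifted denominator displayed in the statement. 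Collecting the leftover $\epsilon_\mu$- and $\epsilon_v$-terms produces $\Delta_{n,\rho}$, and the admissibility constraint on $\rho$ is inherited from the radius condition at $(\bar\mu,\bar v)$. The remainder is purely algebraic bookkeeping.
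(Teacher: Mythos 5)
Your route is essentially the paper's own: the appendix argument consists exactly of one-sided Hoeffding for the mean, Theorem~10 of Maurer--Pontil for the standard deviation, a union bound at level $\delta/2$ each, and then ``plugging in these upper bounds for the population quantities'' into Theorem~\ref{thm:main}. You are in fact more careful than the paper, since the plug-in step tacitly needs the right-hand side of \eqref{eq:main-generalization-bound}, viewed as $F(\mu,v)$, to be nondecreasing in both arguments on the relevant rectangle; your observation that the radius condition is equivalent to $\rho^2(2-\rho^2)\bigl(1+v(M-\mu)^{-2}\bigr)\le 1$ is correct, the same inequality also gives $\partial_v F\ge 0$ (which the paper only asserts informally), and evaluating the radius condition at the confidence-bound pair is indeed the binding case (the condition is hardest where $(M-\mu)^2/v$ is \emph{smallest}, i.e.\ at the substituted values --- your wording there is slightly off, but your conclusion is the right one).

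The one step that does not check out is the final bookkeeping. Writing $\epsilon_\mu:=M\sqrt{\ln(2/\delta)/(2n)}$ and $\epsilon_v:=M\sqrt{2\ln(2/\delta)/(n-1)}$, substituting $\bar\mu=\hat L_n+\epsilon_\mu$ and $\sqrt{\bar v}=\sqrt{S_n^2}+\epsilon_v$ into $F$ leaves over $(1-\rho^2)^2\epsilon_\mu+2C_\rho\epsilon_v$, not $\Delta_{n,\rho}=2C_\rho\epsilon_v-\rho^2(2-\rho^2)\epsilon_\mu$: the two differ by exactly $+\epsilon_\mu$, the Hoeffding correction on the leading term $\bE_P[\ell(Z)]\le\hat L_n+\epsilon_\mu$, which cannot be dropped. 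So your argument, carried out honestly, proves \eqref{eq:main-generalization-bound-sampling} with $\Delta_{n,\rho}$ replaced by the larger quantity $\bigl(\tfrac{2C_\rho}{\sqrt{n-1}}+\tfrac{(1-\rho^2)^2}{2\sqrt{n}}\bigr)M\sqrt{2\ln 2/\delta}$; the negative sign on the $1/(2\sqrt n)$ term in the paper's $\Delta_{n,\rho}$ is not something your (or the paper's one-line) substitution can recover, and appears to be a slip in the statement rather than a gap in your method. Apart from this constant, your proof is correct and follows the same path as the paper, with the monotonicity justification spelled out rather than implicit.
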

Thus, we have derived a certificate for out-of-domain generalization for general bounded loss functions and models $h$ which can be efficiently estimated from finite data sampled from the distribution $P$.

\subsection{Specific Distribution Shifts}
\label{sec:distribution-shifts}
We now consider specific distribution shifts and discuss our main results in light of shifts in the distributions of labels and covariates.

\subsubsection{Label Distribution Shifts}
\label{subsec:label-shifts}
Shifts in the label distribution occur when, during deployment, an ML-system operates in an environment where the relative frequency of certain classes increases or decreases, compared to the training environment, or, as is common in practical applications, instances of previously unseen classes appear. This can potentially harm the model performance dramatically and can have severe implications, in particular in the context of fairness and ethics in machine learning. To investigate this type of distribution shift, we follow the
common practice to assume
that the distribution over covariates, conditioned on the labels, stays constant.
Formally, here, we consider the distribution shift $P \to Q$ expressed via
\begin{equation}
    p(x,\,y) = \pi(x\lvert\,y)p(y) \mapsto q(x,\,y) = \pi(x\lvert\,y)q(y)
\end{equation}
where $\pi(x\lvert\,y)$ is given by a fixed distribution over covariates, conditioned on labels.
In this case, it can be shown that the Hellinger distance is equal to the $L_2$ norm between the square roots of the (label) probability vectors $p = (p(1),\,\ldots,\,p(K))^T\in\R^K$ and $q = (q(1),\,\ldots,\,q(K))^T\in\R^K$ where $K$ is the number of classes, so that
\begin{equation}
    H(P,\,Q) = \frac{1}{\sqrt{2}}\norm[2]{\sqrt{{p}} - \sqrt{{q}}}
\end{equation}
where the square root is applied to each element in the respective probability vector.

\subsubsection{Covariate Distribution Shifts}
\label{subsec:covariate-shifts}
In contrast to label distribution shifts, here we consider shifts to the distribution of covariates. This models scenarios where the relative frequency of labels stays constant, but environments change, for example the shift from day to night in autonomous driving or wildlife surveillance.
Formally, we consider the shift $P \to Q$ with
\begin{equation}
    p(x,\,y) = \pi(y\lvert\,x)p(x) \mapsto q(x,\,y) = \pi(y\lvert\,x)q(x)
\end{equation}
where $\pi(y\lvert\,x)$ is given by a fixed distribution on labels, conditioned on the covariates.
In this scenario, the Hellinger distance between $P$ and $Q$ reduces to the distance between the marginals
\begin{equation}
    H(P,\,Q) = \sqrt{\frac{1}{2}\int_\cX \Big(\sqrt{p(x)} - \sqrt{q(x)}\Big)^2\,dx}.
\end{equation}
In principle, this quantity could be estimated from unlabeled samples of a target distribution $Q$, enabling one to reason about distributional robustness of a given model, by evaluating our bounds from Theorem~\ref{thm:main} and Corollary~\ref{cor:finite-sampling-bound}.
However, in practice, it is generally difficult to estimate $f$-divergences, and in particular the Hellinger distance, from data for practically relevant problem instances.
Although first steps in this direction have been made~\cite{nguyen2007nonparametric,nguyen2010estimating,sreekumar2021non}, it remains largely an open problem and a potential solution would give our approach additional ounces of practical significance.
We view this problem as orthogonal to certifying out-of-domain generalization and believe that research efforts towards such an end-to-end solution pose an exciting future research direction.

\paragraph{Discussion.}
We notice that when considering label- or covariate distribution shifts, we are effectively interested in a subset of all probability distributions with a given predefined Hellinger distance. In other words, if the shift $P \to Q$ models the label distribution shift with distance $H(P,\,Q) \leq \rho$, then applying the certificate \eqref{eq:main-generalization-bound} with radius $\rho$ also covers every other type distribution shift bounded by $\rho$ and hence gives a more conservative view than desired. This is because, in general,
\begin{equation}
    \sup_{\substack{Q: H(P,\,Q) \leq \rho \\ q(\cdot\lvert\,y) \equiv p(\cdot\lvert\,y)}} \bE_Q[\ell(Z)]
    \leq \sup_{Q: H(P,\,Q) \leq \rho} \bE_Q[\ell(Z)]
\end{equation}
arising from the additional constraint that $q(x\lvert\,y) = p(x\lvert\,y)$ for all $x\in\cX$.
A similar argument can be made for covariate shifts.
Naturally, this leads to an intrinsic gap between the actual and certified robustness, which we also observe in our experiments.
Finally, it is worth pointing out the connection with generalization from finite amounts of data which can be seen as a specific instantiation of the worst-case risk~\eqref{eq:worst-case-risk} where the in-domain distribution corresponds to the empirical distribution $\hat{P}_n$ and the radius $\rho$ decays as $\cO(1/n)$. In this sense, the distribution shift originates from the transition from the empirical to the true data distribution. This type of distribution shift has been analyzed in~\cite{duchi2019variance} where further links to variance-based regularization have been established.

\subsection{Specific Loss and Score Functions}
We now turn our attention to specific loss functions and discuss, in particular, the Jensen-Shannon divergence loss, the classification error, and the AUC score.

\subsubsection{Jensen-Shannon Divergence}
\label{apx:jensen-shannon-divergence}
The Jensen-Shannon Divergence is a particular type of loss function for classification models, and serves as a symmetric alternative to other common losses such as cross entropy.
It has been observed that the JSD loss and its generalizations have favorable properties compared to the standard cross entropy loss, as it is bounded, symmetric, and its square root is a distance and hence satisfies the triangle inequality. In~\cite{englesson2021generalized} it has been observed that JSD loss can be seen as an interpolation between cross-entropy and mean absolute error and is particularly well suited for classification problems with noisy labels. Formally, the Jensen-Shannon divergence is defined as
\begin{equation}
    D_{JS}(P,\,Q) := \frac{1}{2}\Big(D_{KL}(P\|\,\mu) + D_{KL}(Q\|\,\mu)\Big)
\end{equation}
where $D_{KL}$ is the Kullback-Leibler divergence and $\mu = \frac{1}{2}(P + Q)$. Since it is a bounded loss function, it is straightforward to apply our results to certify the out-of-domain generalization for the JSD loss and, due to its smoothness, allows for a principled comparison between our bound and the Wasserstein distance certificates proposed in~\cite{sinha2017certifying,cranko2021generalised}.

\subsubsection{Classification Error}
\label{sec:classification-error}
The classification error is among the most popular choices for measuring the performance of classification models and serves as a means to assess how accurate a classifier is on a given data distribution. As it is a non-smooth function, existing approaches cannot in general certify distributional robustness for this function. In contrast, one can immediately instantiate our Theorem~\ref{thm:main} (or the finite sampling version from Corollary~\ref{cor:finite-sampling-bound}) with this loss.
Indeed, for a fixed model $h\colon\cX\to\cY$, let $\epsilon_P := \bP_{(X,Y)\sim P}[h(X) \neq Y]$ and analogously $\epsilon_Q$. Then, in the infinite sampling regime, we immediately get an upper bound on the worst-case classification error from Theorem~\ref{thm:main}. Namely, for a sufficiently small radius $\rho^2 \leq 1 - \sqrt{\epsilon_P}$, we have
\begin{equation}
    \begin{aligned}
        \sup_{Q\in B_\rho(P)}\epsilon_Q &\leq \epsilon_P + 2C_\rho\sqrt{\epsilon_P(1-\epsilon_P)}\\
        &\hspace{6em}+ \rho^2(2-\rho^2)(1-2\epsilon_P)
    \end{aligned}
\end{equation}
where $C_\rho = \sqrt{\rho^2(1-\rho^2)^2(2-\rho^2)}$.

\subsubsection{AUC Score}
\label{subsec:auc-score}
Among other uses, the Area under the ROC (AUC) score~\cite{hanley1982meaning,clemenccon2008ranking} is a metric to measure the performance of binary classification models. Unlike the classification error, which captures the ability to classify a single randomly chosen instance, the AUC score provides a means to quantify the ability to correctly assigning to any positive instance a higher score than to a randomly chosen negative instance. For a binary classification model $h\colon\cX \to \R$ that outputs the score of the positive class, the AUC score is defined as
\begin{equation}
    \AUC(h) = \bP\left[h(X) \geq h(X')\lvert Y = 1,\,Y'=-1\right]
\end{equation}
where $(X,\,Y)$ and $(X',\,Y')$ are independent and identically distributed according to $P$.
By introducing the notation $X_\pm := X\lvert\,Y=\pm1$, we can equivalently write the AUC score as an expectation value over the joint (conditional) distribution of $Z := (X_+,\,X_-)$
\begin{equation}
    \AUC(h) = \bE_{(X_+,\,X_-)\sim P_Z}[\Id_{\{h(X_+) \geq h(X_-)\}}].
\end{equation}
We notice that only distribution shifts on the covariates have an impact on the AUC score. For this reason, we consider a setting similar to the covariate shift setting of~\secref{subsec:covariate-shifts}, although we consider shifts in the conditional distribution $p(x\lvert\,y) \mapsto q(x\lvert y)$ for each $y \in\{\pm1\}$ in contrast to shifts in the marginals. Due to independence, the probability density function of $Z\sim P_Z$ can be written as
\begin{equation}
    p_Z(x_+,\,x_-) = p(x\lvert\,y=+1) p(x\lvert\,y=-1)
\end{equation}
and similarly for the shifted distribution $Q$. Thus, assuming that for both negative and positive samples a distribution drift with $H(P_{X\lvert Y=y},\,Q_{X\lvert Y=y}) \leq \rho$ occurs, the squared Hellinger distance between $P_Z$ and $Q_Z$ is bounded by
\begin{equation}
    \begin{aligned}
        H^2(P_Z,\,Q_Z) \leq \rho^2(2-\rho^2).
    \end{aligned}
\end{equation}
Thus, for certifying out-of-domain generalization for the AUC score, we can apply our bound by instantiating it with Hellinger distance $\sqrt{\rho^2(2-\rho^2)}$. We remark that for the AUC score, one is typically interested in \emph{lower} bounding it under distribution shifts. To that end, we present a lower bound version of our Theorem~\ref{thm:main} in Appendix~\ref{apx:proof-lower-bound}.

\section{Experiments}
\begin{figure*}[t]
    \vskip 0.2in
    \centering
    \includegraphics[width=\textwidth]{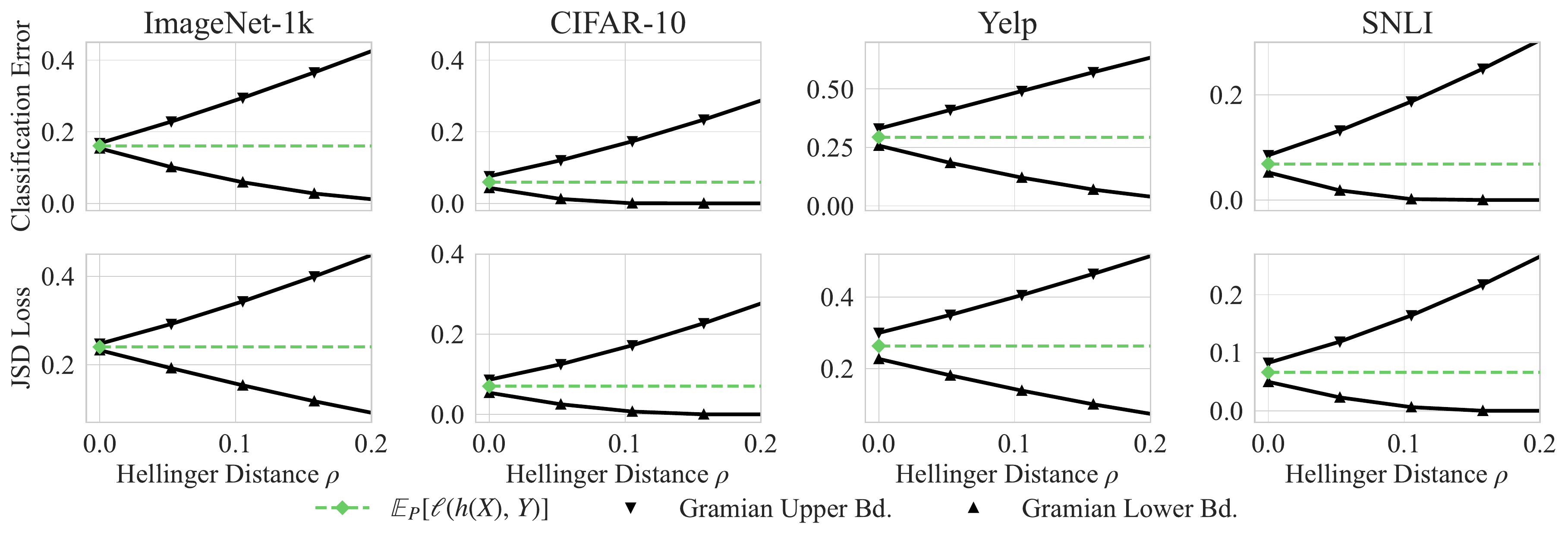}
    \vspace{-2em}
    \caption{Distributional robustness certificates for generic distribution shifts on vision and NLP datasets for JSD and 0-1 loss.}
    \label{fig:generic-shift-all}
    \vskip -0.2in
\end{figure*}
\label{sec:experiments}

We now experimentally validate our theoretical findings on
a diverse collection of
datasets and scenarios.
We first provide
certificates considering
generic distribution shifts $P\to Q$ and then
provide detailed analysis on the two specific scenarios
described in Sections~\ref{subsec:label-shifts} and~\ref{subsec:covariate-shifts}, namely, shifts in the label and in the covariate distributions.
Finally, we construct a synthetic example that allows for a fair comparison of our bounds with the Wasserstein certificate of~\cite{sinha2017certifying}, which indicates that in addition to favorable scalability properties, our bounds are also considerably tighter.
We remark that all our bounds are computed using the finite sampling bounds presented in Corollary~\ref{cor:finite-sampling-bound} and hold with $99\%$ probability ($\delta = 0.01$).\footnote{Our code is publicly available at \url{https://github.com/DS3Lab/certified-generalization}.}

\paragraph{Datasets} We certify out-of-domain generalization on two standard vision datasets:
ImageNet-1k~\cite{russakovsky2015imagenet} containing objects of 1,000 different classes;
and CIFAR-10~\cite{cifar10}, which contains natural images of 10 different classes.
We also conduct experiments on the
standard natural language processing (NLP) datasets Yelp~\cite{yelpdataset} and SNLI~\cite{snliemnlp2015}.
We follow \citet{lin2017structured}
to sample $2,000$ examples for the Yelp test set and
$10,000$ examples for the
SNLI test set.

\paragraph{Models}
For classification on ImageNet-1k, we use the EfficientNet-B7~\cite{tan2019efficientnet} architecture which
we initialize with pre-trained weights;
we use DenseNet-121~\cite{huang2017densely}
for CIFAR-10.
On Yelp, we use BERT~\cite{devlin2018bert} and on SNLI we use a DeBERTa architecture~\cite{he2020deberta}.

\begin{figure}[t]
    \vskip 0.2in
    \centering
    \includegraphics[width=\columnwidth]{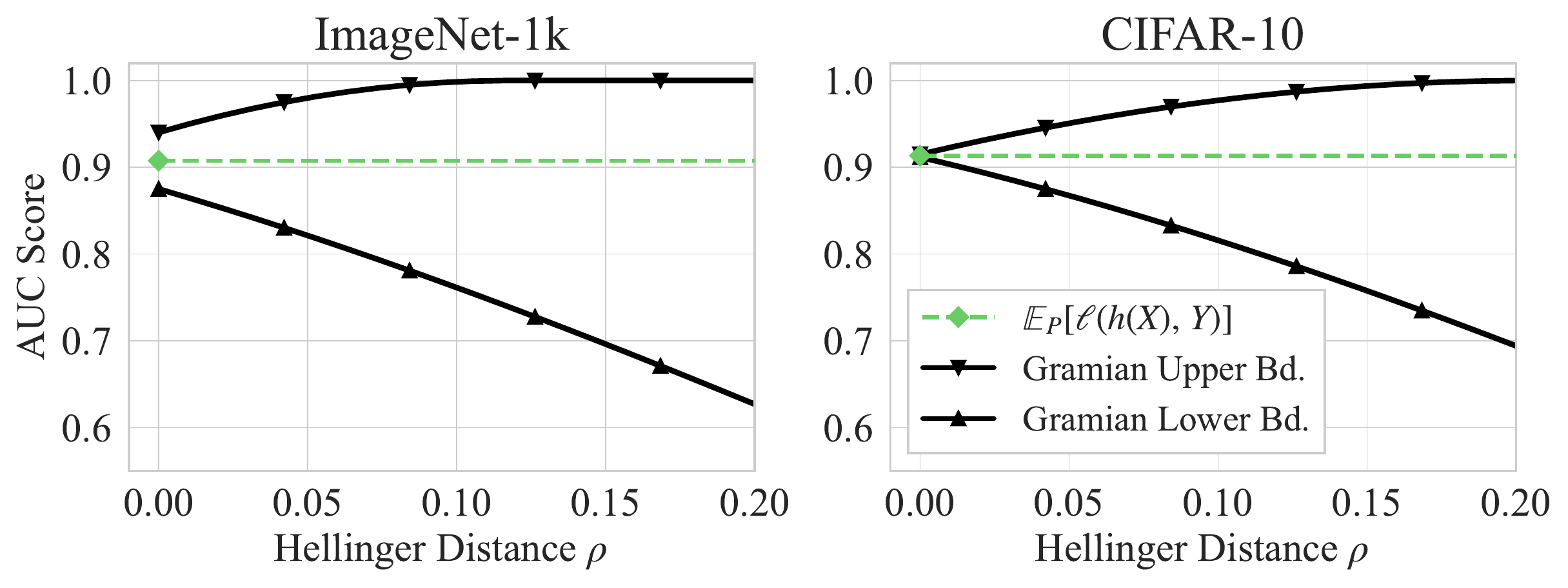}
    \vspace{-1em}
    \caption{Distributional robustness certificates for AUC against generic distribution shifts on binary ImageNet and CIFAR datasets.}
    \label{fig:generic-shift-auc}
    \vskip -0.2in
\end{figure}

\paragraph{Settings for AUC Scores}
When we consider AUC scores, we further constrain all multiclass datasets into a binary version.
To this end, on ImageNet, we randomly choose two classes and train a ResNet-152 architecture to discriminate between the two Synsets \emph{n01601694} and \emph{n04330267} (corresponding to the classes `water ouzel' and `stove'). Similarly, on CIFAR-10 we also pick two classes at random and train a ResNet-110 classifier for the two classes \emph{`bird'} and \emph{`horse'}.

\subsection{Certifying Distribution Shifts}

Figures~\ref{fig:generic-shift-all}
and~\ref{fig:generic-shift-auc}
illustrate the certificates that
we provide on a diverse range of
datasets, considering three
different scores: classification
error, JSD loss, and AUC score.
In all these figures, the
x-axis corresponds to the degree
of distribution drift,
and the \textit{Gramian Certificate}
curves correspond to
the lower and upper bound
of these scores under
distribution drifts.
To our best knowledge,
this is the first time that nonvacuous
certificates are obtained
on this diverse
range of datasets, scores, and
large-scale models.

\begin{figure}[t]
    \vskip 0.2in
    \centering
    \includegraphics[width=0.95\columnwidth]{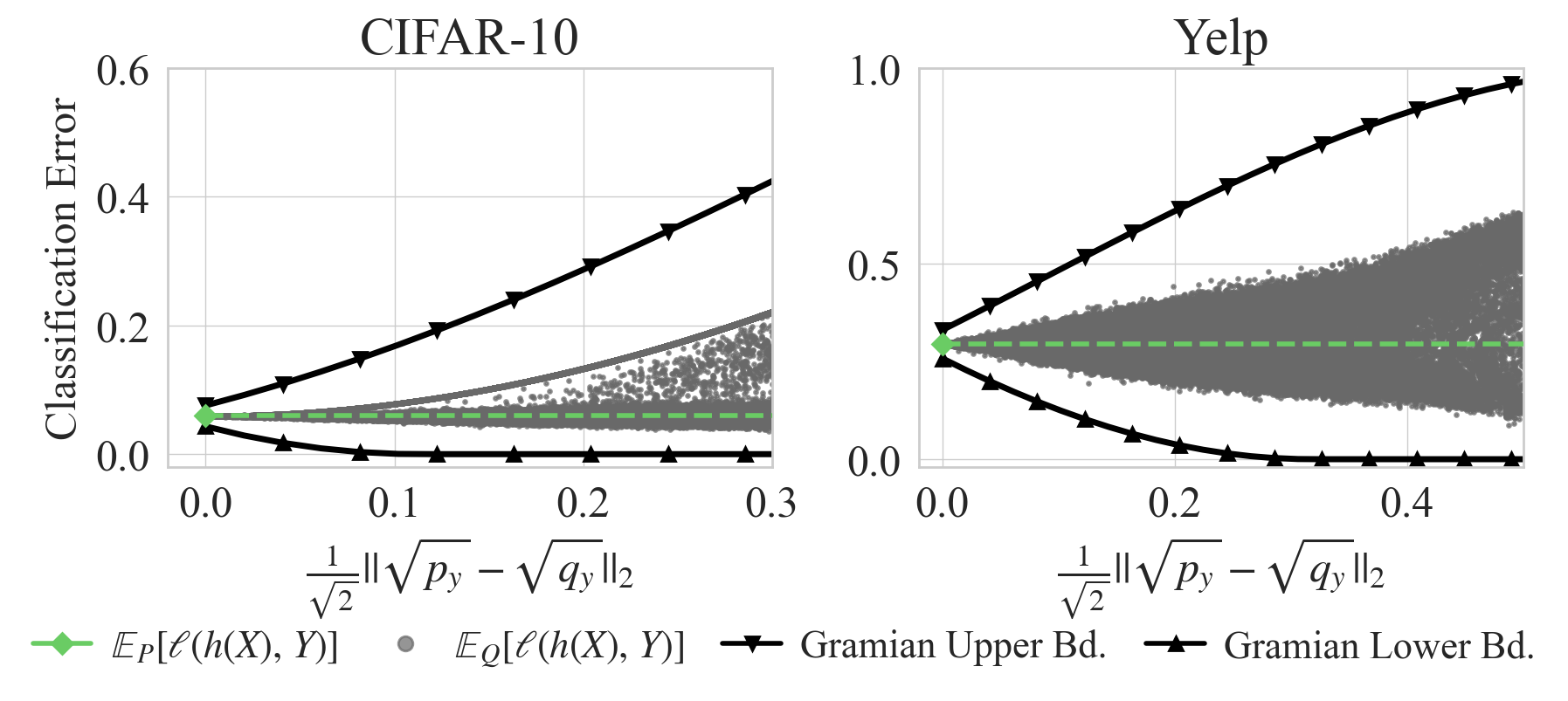}
    \vspace{-1em}
    \caption{Certified Generalization for label distribution shifts.
    Each gray point corresponds to a randomly sampled label distribution with corresponding Hellinger distance and empirical loss.}
    \label{fig:label-shift}
    \vskip -0.2in
\end{figure}

\paragraph*{Label Distribution Shifts}

To get a better indication of how well our certificates capture the true risk under label distribution shifts,
we randomly generate 100,000 shifted class distributions on the CIFAR-10 and Yelp datasets by 1) subsampling existing classes, 2) removing the counts of existing classes, and 3) including new "unseen" classes.
This allows us to empirically compute both the classification error and the Hellinger distance
and enables us to compare the certificates to the actual loss on the shifted distribution.
We can see from Figure~\ref{fig:label-shift} that our certificates
indeed provide a valid upper
and lower bound.
Note that, given that
all shifted class distributions
are randomly sampled,
we might not hit the
true worst-case scenario,
explaining the clear gap between the generalization certificates and the scores obtained from the randomly generated label distributions.
Another reason for the gap can be
attributed to the
intrinsic gap for label and covariate shifts, discussed in Section~\ref{sec:distribution-shifts}.
We refer the reader to Appendix~\ref{apx:additional-models} for analogous figures with a larger set of model architectures on the CIFAR-10 dataset. Finally, we point out the difficulty in sampling these class distributions for datasets with a large number of classes and include analogous figures for ImageNet and the SNLI dataset in Appendix~\ref{apx:additional-functions}.

\begin{figure}[t]
\vskip 0.2in
\centering
     \begin{subfigure}[b]{0.24\textwidth}
         \centering
         \includegraphics[width=\columnwidth]{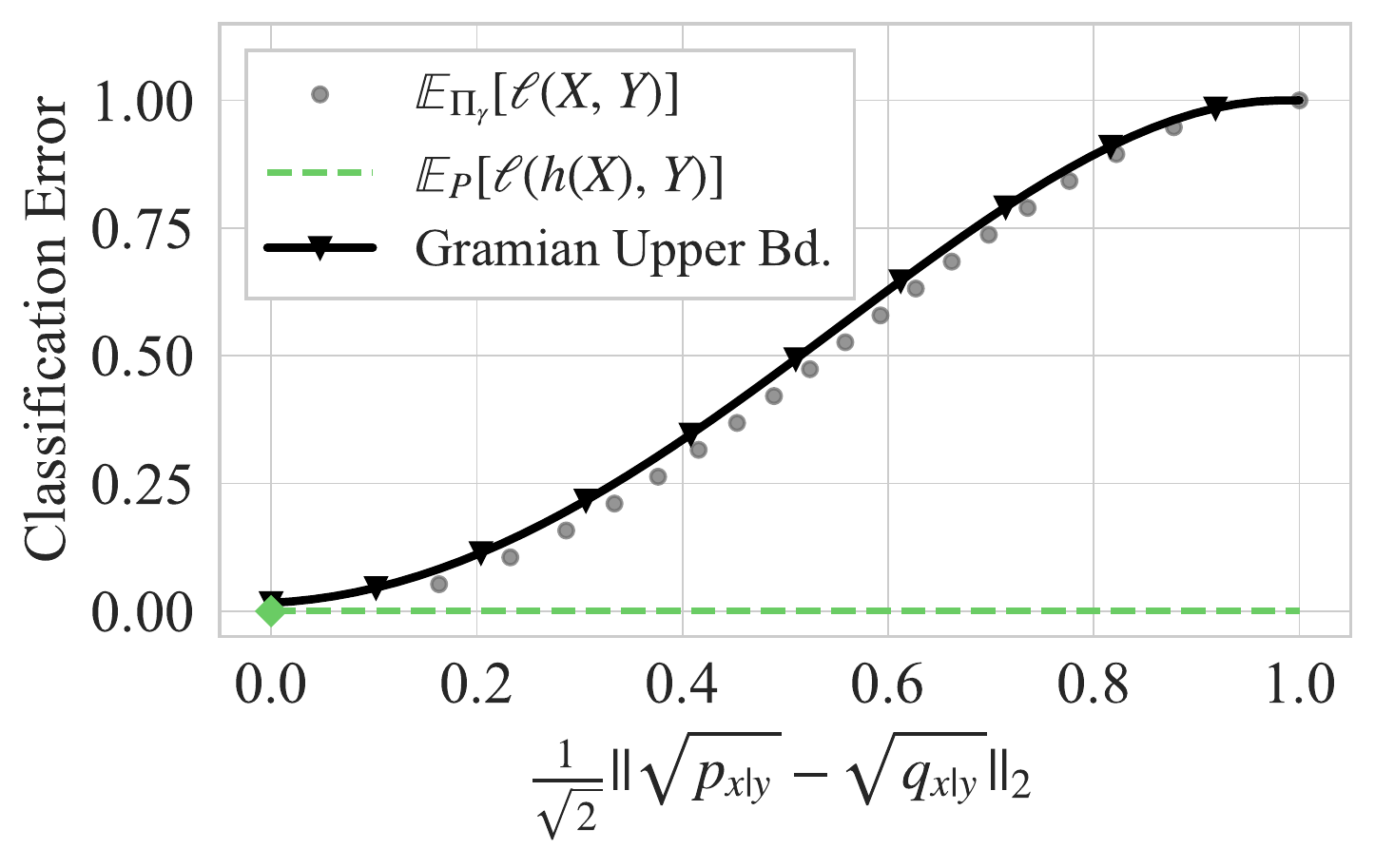}
         \vspace{-1em}
         \caption{Classification Error}
     \end{subfigure}%
     \hfill
     \begin{subfigure}[b]{0.24\textwidth}
         \centering
         \includegraphics[width=\columnwidth]{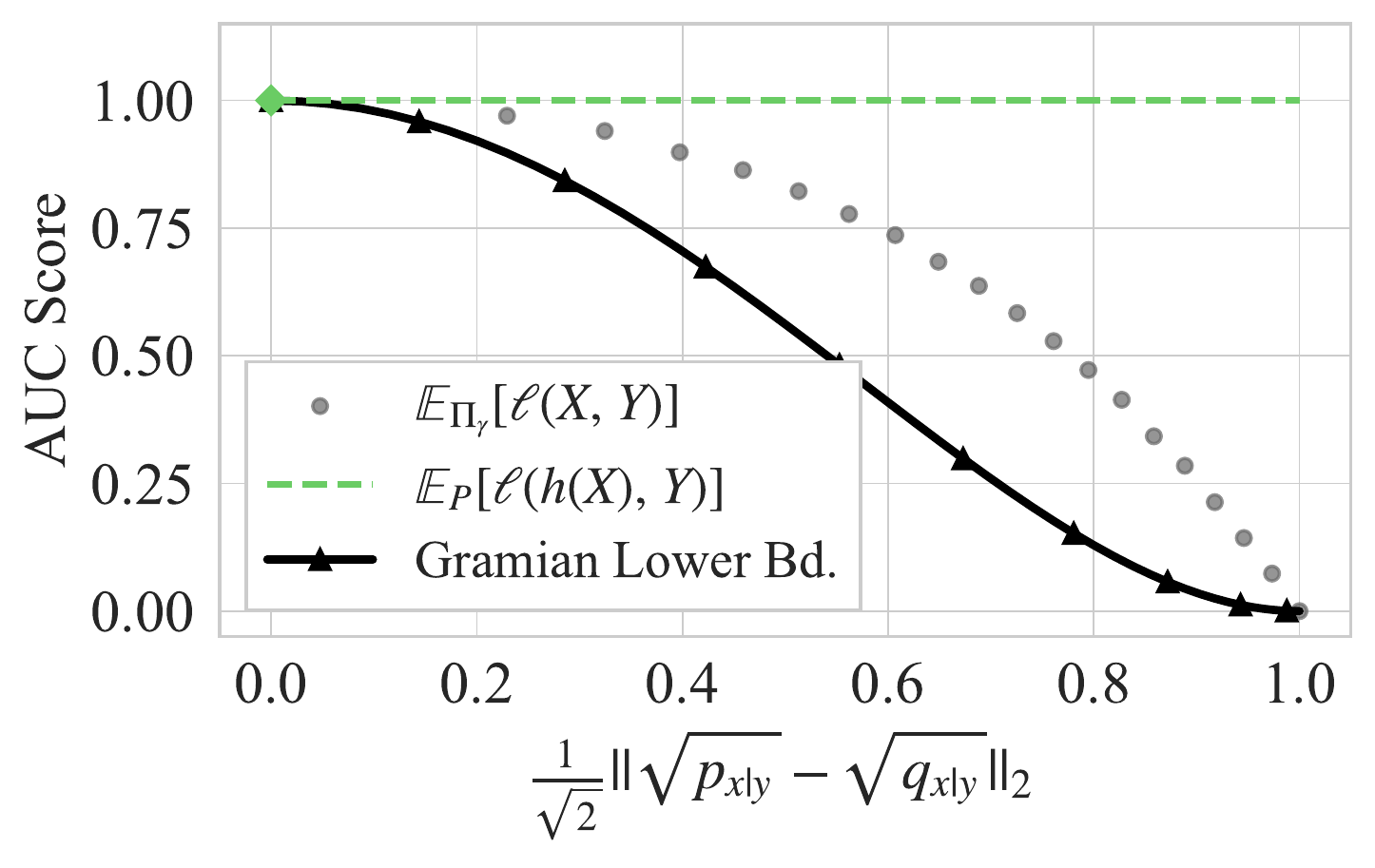}
         \vspace{-1em}
         \caption{AUC Score}
     \end{subfigure}
\vspace{-2em}
\caption{Certificate against covariate shift on colored MNIST.}
\label{fig:cmnist-covariate-drift}
\vskip -0.2in
\end{figure}

\paragraph*{Covariate Distribution Shifts.}

We now investigate our certificates in light of changes in the distribution of the covariates and consider the scenario described in~\secref{subsec:covariate-shifts}.
In this experiment, we use the binary Colored MNIST dataset~\cite{kim2019learning,arjovsky2019invariant}, which is constructed from the MNIST dataset by coloring the digits 0-4 in green and 5-9 in red for the training set, while flipping the coloring in the test set. The classifier is then trained to classify the digits into the two groups $\{0,1,2,3,4\}$ and $\{5,6,7,8,9\}$. In this setting, the classifier learns to perfectly distinguish the two classes in the training set, but fails on the testing set since the color is a stronger predictor than the shape of the digits.
To investigate the space between these two extreme cases, we generate mixture distributions between training and test distribution in the following way.
We set $P$ to be the training distribution and $Q$ the testing distribution (containing digits with flipped colors).
Guided by a mixing parameter $\gamma$, we mix $P$ and $Q$ to obtain the mixture distribution $\Pi_\gamma := \gamma\cdot P + (1-\gamma)\cdot Q$.
Since $P$ and $Q$ have disjoint support, we compute the Hellinger distance between $P$ and $\Pi_\gamma$ as $H(P,\,\Pi_\gamma) = \sqrt{1 - \sqrt{\gamma}}$ as shown in Appendix~\ref{apx:hellinger-distance-mixture}.
Figure~\ref{fig:cmnist-covariate-drift} illustrates our robustness certificates for the 0-1 loss and the AUC Score, as well as the empirical losses $\bE_{\Pi_\gamma}[\ell(X, \,Y)]$ for different values of the mixture parameter $\gamma$.
We see from the figure that our technique provides quite tight certificates for both classification error and AUC score.

\begin{figure}[t!]
\vskip 0.2in
\centering
    \begin{subfigure}[b]{0.48\textwidth}
         \centering
         \includegraphics[width=\columnwidth]{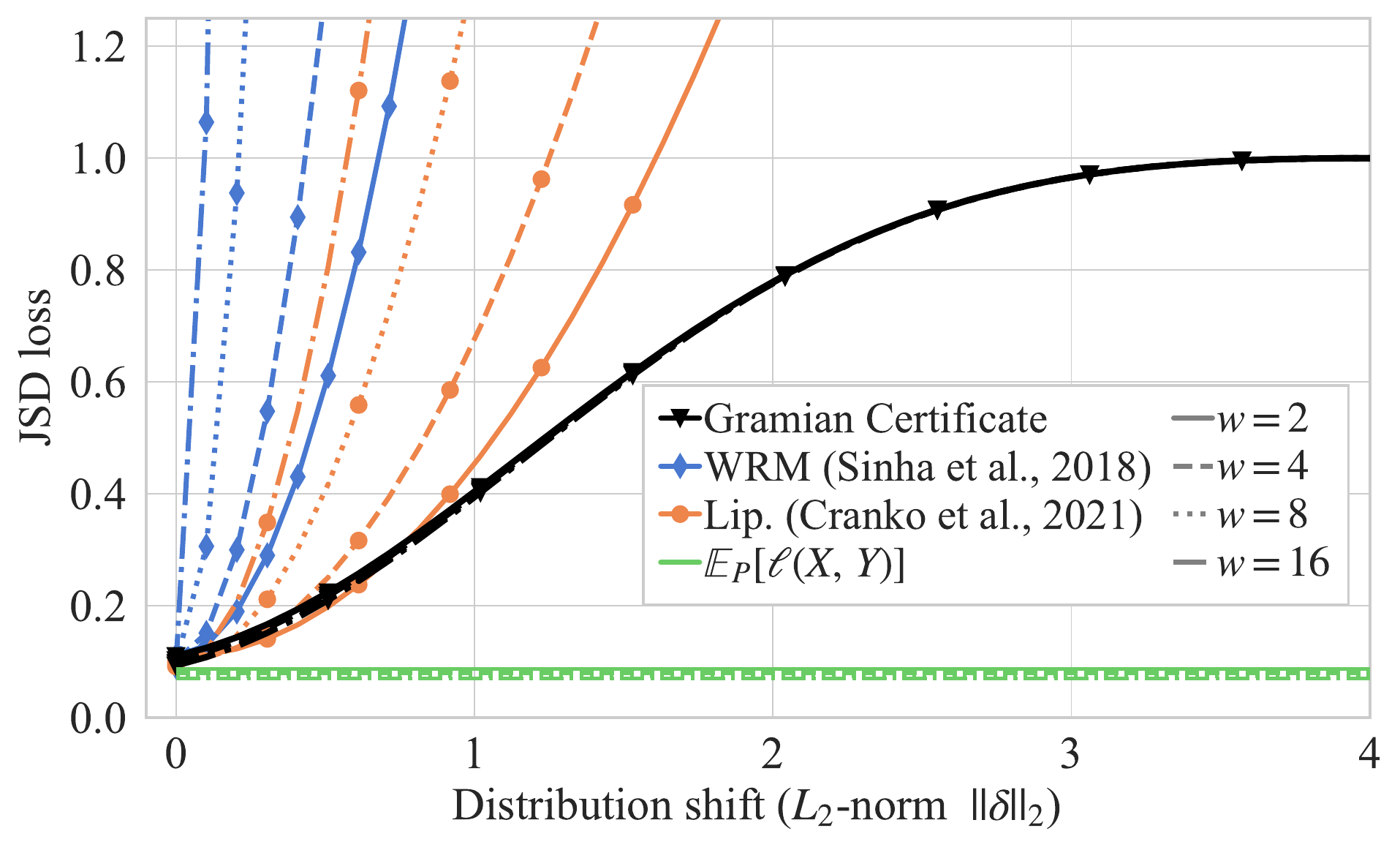}
         \vspace{-0.5em}
         \caption{Robustness certificates for varying network widths $w$ and $n_h=2$ hidden layers.}
     \end{subfigure}
     \begin{subfigure}[b]{0.48\textwidth}
         \centering
         \includegraphics[width=\columnwidth]{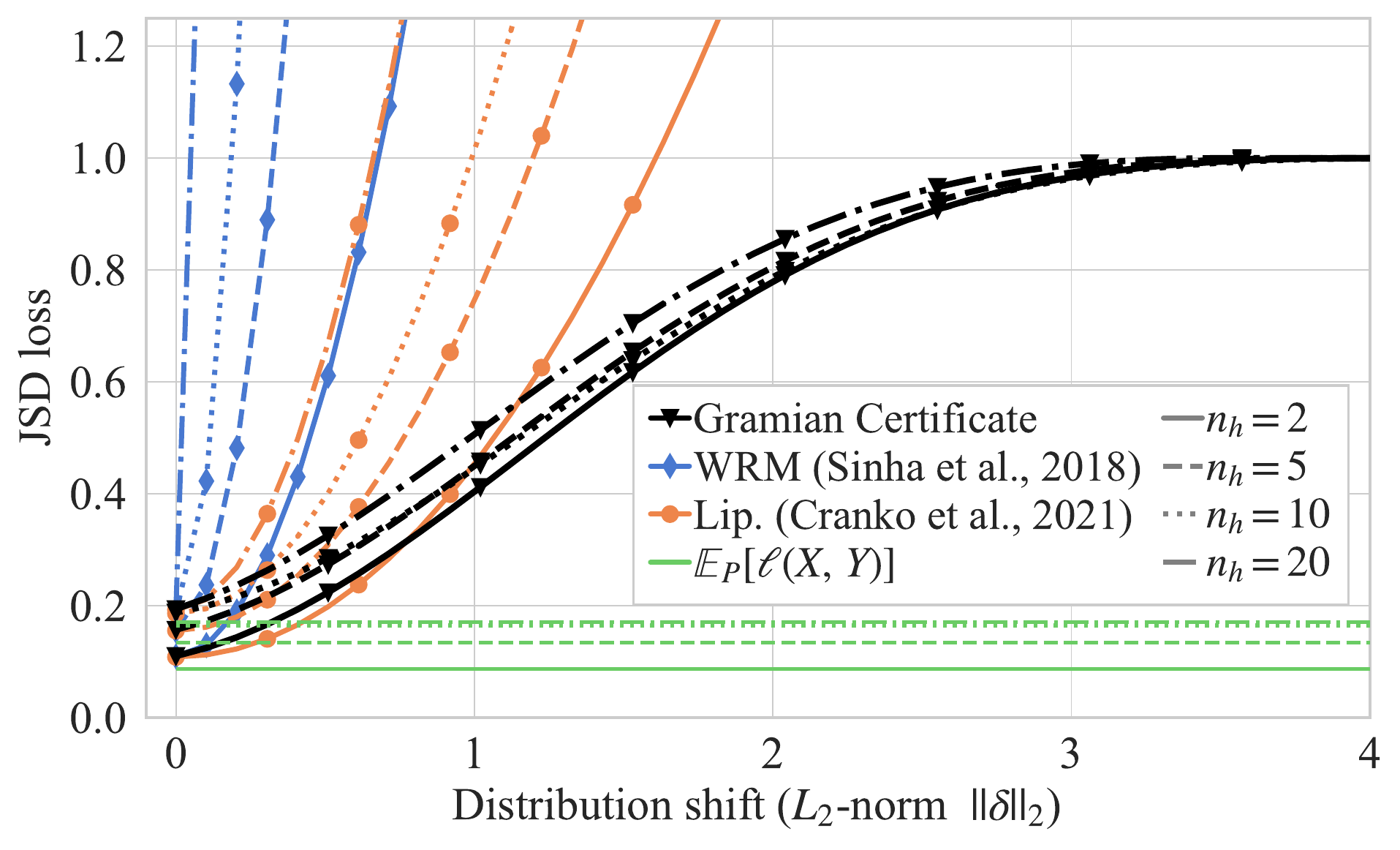}
         \vspace{-1em}
         \caption{Robustness certificates for varying numbers of hidden layers $n_h$ and fixed width $w=4$.}
         \vspace{-0.5em}
\end{subfigure}
\caption{Comparison of our approach with the Wasserstein-based certificates from~\cite{sinha2017certifying,cranko2021generalised} for varying levels of model complexity.}
\label{fig:wrm-comparison}
\vskip -0.2in
\end{figure}

\subsection{Comparison with Wasserstein Certificates}
We now construct a synthetic example that enables a fair comparison with two baseline certificates based on the Wasserstein distance. Namely, we compare our approach with 1) the certificate which uses the Lipschitz constant of the ML model, presented in~\cite{cranko2021generalised}; and 2) with the pointwise robustness certificate derived in~\cite{sinha2017certifying} from the dual formulation of the worst-case risk. We remark that these certificates cannot be applied to our previous examples because of their prohibitive assumptions. To make the three techniques comparable, we consider a Gaussian mixture model and certify the Jensen-Shannon divergence loss, while modeling distribution shifts as dislocations, $X \mapsto X + \delta$ for a fixed perturbation vector $\delta$. This allows us to parameterize the distribution shift via the $L_2$-norm of $\delta$ and obtain a one-to-one correspondence between our Hellinger distance and the Wasserstein distance, and enables a principled comparison. We describe the details of this synthetic dataset in Appendix~\ref{apx:synthetic-dataset}.
To investigate how the techniques scale with increased model complexity, we use fully connected feedforward neural networks with varying depths and widths. In addition, to accommodate \cite{sinha2017certifying}'s assumptions on smoothness, we use ELU activation functions on all layers.
We remark that the bound in~\cite{sinha2017certifying} requires one to solve a complex maximization problem, which requires the composition of the loss function and the network to be sufficiently smooth. Furthermore, the concavity of the maximization problem hinges on knowledge of the Lipschitz constant of the gradient. For small examples, this Lipschitz constant can be obtained, as we show in Appendix~\ref{apx:wrm-comparison-lipschitz} for the JSD loss function.
As can be seen in Figure~\ref{fig:wrm-comparison}, all bounds converge to the expected loss $\bE_P[\ell(X,\,Y)]$ as the perturbation goes to zero, $\norm[2]{\delta} \to 0$. However, the certificate from~\cite{sinha2017certifying} quickly becomes vacuous as the perturbation magnitude increases. In addition, both baseline bounds become loose with increasing model complexity, while our bound is virtually agnostic to the model architecture as it only depends on the variance and expected loss on the distribution $P$.

\section{Related Work}
Distributionally robust optimization first appeared in the context of inventory management~\cite{scarf1958min} and has since been discovered by the machine learning community as a useful tool to train machine learning models which generalize better to new distributions~\cite{ben2013robust,gao2016distributionally,shafieezadeh2019regularization}.
The uncertainty set occurring in the distributionally robust loss has been studied in terms of Wasserstein balls in~\cite{gao2016distributionally,sinha2017certifying,shafieezadeh2019regularization,cranko2021generalised,lee2017minimax,cisse2017parseval,kuhn2019wasserstein,blanchet2019quantifying}, and $f$-divergence balls in~\cite{ben2013robust,duchi2021statistics,lam2016robust,duchi2019variance,duchi2021learning}.
From a more general viewpoint,~\cite{husain2020distributional} connects integral probability metrics with distributional robustness in general and provides links with generative adversarial networks.
In another vein, maximum mean discrepancy measures have been investigated in~\cite{staib2019distributionally} for generalization in Kernel methods.
~\cite{sinha2017certifying} propose a method to certify generalization by using the dual formulation of the Wasserstein worst-case risk. However, their approach requires the loss model and loss function to be smooth and relies on an estimate of the Lipschitz constant of gradients, which quickly becomes vacuous for large problem sizes. Related techniques based on Wasserstein distances~\cite{gao2016distributionally,shafieezadeh2019regularization,blanchet2019quantifying,kuhn2019wasserstein,cranko2021generalised} make similarly prohibitive assumptions and generally fail to provide scalable alternatives.
In contrast, we study uncertainty sets expressed as Hellinger balls and provide a model-specific distributional robustness guarantee which only makes minimal assumptions on the loss (namely, boundedness) and thus scales to large problems.
The authors in~\cite{subbaswamy2021evaluating} consider distributionally robust optimization under fine-grained shifts in the marginal distributions, and reason about the worst-case risk on subpopulations in the data distribution.
Orthogonal to our work is the topic of certified adversarial robustness~\cite{wong2018scaling,lecuyer2019certified,cohen2019certified, szegedy2014intriguing,carlini2017adversarial}. This line of research seeks to reason about robustness at the instance level, while we aim to bound the worst-case risk over a set of distributions.

\section{Conclusion}
\label{sec:discussion}
In this paper, we have studied the problem of certifying the out-of-domain generalization for blackbox functions.
To that end, we have presented a framework to bound the worst-case population risk over an uncertainty set of probability distributions given by a Hellinger ball.
In contrast to existing approaches, our framework is scalable since it treats the loss function together with the model as a blackbox and thus requires virtually no knowledge about the internals of, e.g., neural networks.
We have provided experimental evidence that our technique can handle large models and datasets and provides, to the best of our knowledge, the first non-vacuous out-of-domain generalization bounds for problems as large as ImageNet with a full-fledged EfficientNet-B7.
While our techniques provide a means to certify robustness against \emph{general} distribution shifts, future research directions can potentially extensively study more specific distribution shifts.
In addition, it will be interesting to link our results to related topics such as fairness in machine learning.

\section*{Acknowledgments}
CZ and the DS3Lab gratefully acknowledge the support from the Swiss State Secretariat for Education, Research and Innovation (SERI)’s Backup Funding Scheme for European Research Council (ERC) Starting Grant TRIDENT (101042665), the Swiss National Science Foundation (Project Number 200021\_184628, and 197485), Innosuisse/SNF BRIDGE Discovery (Project Number 40B2-0\_187132), European Union Horizon 2020 Research and Innovation Programme (DAPHNE, 957407), Botnar Research Centre for Child Health, Swiss Data Science Center, Alibaba, Cisco, eBay, Google Focused Research Awards, Kuaishou Inc., Oracle Labs, Zurich Insurance, and the Department of Computer Science at ETH Zurich.
BL, LL and BW are supported by NSF grant No.1910100, NSF CNS 2046726, C3 AI, and the Alfred P. Sloan Foundation

\bibliography{cr-refs}
\bibliographystyle{icml2022}

\newpage
\appendix
\onecolumn
\section{Proofs}
\subsection{Proof of Theorem~\ref{thm:main}}
\label{apx:proof-main-thm}
We begin the proof by stating a lemma which allows one to bound inner products between elements of a Hilbert space $\cH$.
\begin{lemma}
    \label{lem:auxiliary-inner-product-bound}
    Let $\cH$ be a Hilbert space with inner product $\langle\cdot,\,\cdot\rangle$, let $A\in\cB(\cH)$ be a positive semidefinite bounded linear operator on $\cH$ and let $u,\,v\in\cH\setminus\{0\}$ be such that
    \begin{equation}
        \label{eq:overlap-condition}
        \frac{\abs{\langle u,\,v\rangle}^2}{\left(\langle u,\,u\rangle\langle v,\,v\rangle - \abs{\langle u,\,v\rangle}^2\right)} \geq \frac{\Delta}{\langle v,\,Av\rangle^2},
    \end{equation}
    where
    \begin{equation}
            \Delta := \left(\langle v,\,v\rangle\langle Av,\,Av\rangle - \langle Av,\,v\rangle^2\right).
    \end{equation}
    Then
    \begin{equation}
        \begin{gathered}
            \langle u,\,Au\rangle \geq
            \frac{\abs{\langle u,\,v\rangle}^2 \langle v,\,Av\rangle}{\langle v,\,v\rangle^2}
            - \frac{2\abs{\langle u,\,v\rangle}\sqrt{\left(\langle u,\,u\rangle\langle v,\,v\rangle - \abs{\langle u,\,v\rangle}^2\right)\Delta}}{\langle v,\,v\rangle^2}
            + \frac{\left(\langle u,\,u\rangle\langle v,\,v\rangle - \abs{\langle u,\,v\rangle}^2\right)\Delta}{\langle v,\,v\rangle^2 \langle v,\,Av\rangle}\\
        \end{gathered}
    \end{equation}
\end{lemma}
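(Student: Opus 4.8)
The plan is to recognize the whole statement as the non-negativity of a $3\times 3$ Gram determinant combined with a single application of Cauchy--Schwarz. Since $A\in\cB(\cH)$ is positive semidefinite, the spectral theorem provides a self-adjoint positive semidefinite square root $A^{1/2}\in\cB(\cH)$ with $A^{1/2}A^{1/2}=A$; in particular $\langle x,Ay\rangle=\langle A^{1/2}x,A^{1/2}y\rangle$ for all $x,y\in\cH$, and $\langle v,Av\rangle=\norm{A^{1/2}v}^2\ge 0$. I will treat the generic case $\langle v,Av\rangle>0$ (when it vanishes, $A^{1/2}v=0$, hence $Av=0$ and $\Delta=0$, and both the hypothesis and the conclusion degenerate). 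Cauchy--Schwarz applied to $A^{1/2}u$ and $A^{1/2}v$ gives
\[
\langle u,Au\rangle=\norm{A^{1/2}u}^2\ \ge\ \frac{\abs{\langle A^{1/2}u,A^{1/2}v\rangle}^2}{\norm{A^{1/2}v}^2}=\frac{\abs{\langle u,Av\rangle}^2}{\langle v,Av\rangle},
\]
so it suffices to produce a good lower bound on $\abs{\langle u,Av\rangle}$.

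For that I would decompose $u$ and $Av$ into their components along $v$ and orthogonal to $v$. Writing $u=\tfrac{\langle v,u\rangle}{\langle v,v\rangle}v+u^\perp$ with $u^\perp\perp v$, one has $\norm{u^\perp}^2=\langle u,u\rangle-\abs{\langle u,v\rangle}^2/\langle v,v\rangle$, and likewise the component of $Av$ orthogonal to $v$ has squared norm $\langle Av,Av\rangle-\langle Av,v\rangle^2/\langle v,v\rangle=\Delta/\langle v,v\rangle$. The triangle inequality together with Cauchy--Schwarz then yields
\[
\abs{\langle u,Av\rangle}\ \ge\ \frac{\abs{\langle u,v\rangle}\,\langle v,Av\rangle}{\langle v,v\rangle}-\frac{\sqrt{\bigl(\langle u,u\rangle\langle v,v\rangle-\abs{\langle u,v\rangle}^2\bigr)\Delta}}{\langle v,v\rangle}.
\]
Equivalently, in the spirit of the main text this is exactly the non-negativity of $\det G$ for the Gram matrix $G$ of $u,v,Av$: viewed as a quadratic in $\langle u,Av\rangle$, its discriminant factors precisely as $\bigl(\langle u,u\rangle\langle v,v\rangle-\abs{\langle u,v\rangle}^2\bigr)\Delta$, both factors being nonnegative by Cauchy--Schwarz.

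Finally, hypothesis~\eqref{eq:overlap-condition} is exactly what makes the right-hand side of the last display nonnegative: rearranging it gives $\abs{\langle u,v\rangle}^2\langle v,Av\rangle^2\ge\bigl(\langle u,u\rangle\langle v,v\rangle-\abs{\langle u,v\rangle}^2\bigr)\Delta$, i.e. $\abs{\langle u,v\rangle}\langle v,Av\rangle\ge\sqrt{\bigl(\langle u,u\rangle\langle v,v\rangle-\abs{\langle u,v\rangle}^2\bigr)\Delta}$, all quantities being nonnegative (using $\langle v,Av\rangle\ge 0$). Hence I may square the lower bound on $\abs{\langle u,Av\rangle}$; dividing by $\langle v,Av\rangle$ as in the first step and expanding the square reproduces, term by term, the three summands in the statement. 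The only delicate points I anticipate are bookkeeping: justifying $A^{1/2}$ and the identity $\langle u,Av\rangle=\langle A^{1/2}u,A^{1/2}v\rangle$, keeping absolute values throughout so the argument is valid over $\C$ as well as $\R$ (the orthogonal-decomposition form above handles this cleanly, avoiding sign issues in "completing the square"), and isolating the degenerate case $\langle v,Av\rangle=0$ — none of which is a genuine obstacle.
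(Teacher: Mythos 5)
Your proposal is correct and takes essentially the same route as the paper's proof: the same intermediate lower bound $\abs{\langle u,\,Av\rangle} \geq \frac{\abs{\langle u,\,v\rangle}\langle v,\,Av\rangle}{\langle v,\,v\rangle} - \frac{\sqrt{(\langle u,\,u\rangle\langle v,\,v\rangle - \abs{\langle u,\,v\rangle}^2)\Delta}}{\langle v,\,v\rangle}$, the same Cauchy--Schwarz step through $A^{1/2}$ bounding this by $\sqrt{\langle u,\,Au\rangle\langle v,\,Av\rangle}$, and the same squaring step legitimized by condition~\eqref{eq:overlap-condition}. The only (minor) difference is how the intermediate bound is obtained --- you project $u$ and $Av$ onto $v$ and its orthogonal complement and use the reverse triangle inequality plus Cauchy--Schwarz, while the paper expands the nonnegative Gram determinant of $u,\,v,\,Av$ and reads the bound off the roots of the resulting quadratic (after a phase rotation); as you yourself observe, these are equivalent, and your version handles the complex case without the phase bookkeeping.
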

\begin{proof}[Proof of Lemma~\ref{lem:auxiliary-inner-product-bound}]
    In the following we denote by $\Re(z):=\frac{1}{2}(z + \Bar{z})$ and $\Im(z):=\frac{1}{2i}(z - \Bar{z})$ the real and imaginary parts of a complex number $z\in\C$. Let $G$ be the Gram matrix of the vectors $u,\,v,\,Av$ and recall that Gram matrices are positive semidefinite, $G \geq 0$. Since the determinant of a matrix is given by the product of its eigenvalues, it follows that
    \begin{equation}
        \label{eq:determinant-inequality}
        \begin{aligned}
            0 \leq \mathrm{det}(G) &= \begin{vmatrix}
            \langle u,\,u\rangle & \langle u,\,v\rangle & \langle u,\,Av\rangle\\
            \langle v,\,u\rangle & \langle v,\,v\rangle & \langle v,\,Av\rangle\\
            \langle Av,\,u\rangle & \langle Av,\,v\rangle & \langle Av,\,Av\rangle
            \end{vmatrix}\\
            &= \left[\left(\langle u,\,u\rangle\langle v,\,v\rangle - \abs{\langle u,\,v\rangle}^2\right)\langle Av,\,Av\rangle - \langle u,\,u\rangle\abs{\langle Av,\,v\rangle}^2\right]\\
            &\hspace{12em} + 2 \Re\left(\langle u,\,v\rangle\langle v,\,Av\rangle\langle Av,\,u\rangle\right) - \langle v,\,v\rangle \abs{\langle Av,\,u\rangle}^2.
        \end{aligned}
    \end{equation}
    Let $\phi \in \R$ be such that $e^{i\phi}\langle u,\,v\rangle = \abs{\langle u,\,v\rangle}$ and let $\Tilde{u} = e^{-i\phi}.$\footnote{We use the convention that inner products are linear in their second argument and conjugate linear in the first inner product.} Thus, we have
    \begin{equation}
        \begin{aligned}
            &\left[\left(\langle u,\,u\rangle\langle v,\,v - \abs{\langle u,\,v\rangle}^2\rangle\right)\langle Av,\,Av\rangle - \langle u,\,u\rangle\abs{\langle Av,\,v\rangle}^2 - \langle v,\,v\rangle \Im\langle(Av,\,\Tilde{u}\rangle)^2\right]\\
            &\hspace{18em}+ 2\abs{\langle u,\,v\rangle}\langle v,\,Av\rangle\Re(\langle Av,\,\Tilde{u}\rangle) - \langle v,\,v\rangle \Re(\langle Av,\,\Tilde{u}\rangle)^2 \geq 0
        \end{aligned}
    \end{equation}
    The LHS of this inequality can be seen as a quadratic polynomial in $\Re(\langle Av,\,\Tilde{u}\rangle)$ and the non-negativity effectively constrains the values that $\Re(\langle Av,\,\Tilde{u}\rangle)$ can take to be within the roots of the polynomial. Thus, we have, in particular,
    \begin{equation}
        \label{eq:main-lemma-intermediate-inequality-I}
        \begin{aligned}
            \Re(\langle Av,\,\Tilde{u}\rangle) &\geq \frac{\abs{\langle u,\,v\rangle} \langle v,\,Av\rangle}{\langle v,\,v\rangle} - \frac{\sqrt{\left(\langle u,\,u\rangle\langle v,\,v\rangle - \abs{\langle u,\,v\rangle}^2\right)\Delta - \langle v,\,v\rangle \Im(\langle Av,\,u\rangle)^2}}{\langle v,\,v\rangle}\\
            & \geq \frac{\abs{\langle u,\,v\rangle} \langle v,\,Av\rangle}{\langle v,\,v\rangle} - \frac{\sqrt{\left(\langle u,\,u\rangle\langle v,\,v\rangle - \abs{\langle u,\,v\rangle}^2\right)\Delta}}{\langle v,\,v\rangle}
        \end{aligned}
    \end{equation}
    with $\Delta := \left(\langle v,\,v\rangle\langle Av,\,Av\rangle - \langle Av,\,v\rangle^2\right)$.
    Since $A$ is positive semidefinite, it has a square root, i.e. there exists a linear operator $A^{1/2}$ with $A^{1/2}A^{1/2} = A$. It follows that
    \begin{equation}
        \begin{aligned}
            \Re(\langle Av,\,\Tilde{u}\rangle) = &\overset{(i)}{\leq} \abs{\langle Av,\,\Tilde{u}\rangle} = \abs{\langle Av,\,u\rangle} \overset{(ii)}{=} \abs{\langle A^{1/2}v,\,A^{1/2}u\rangle}\\
            & \overset{(iii)}{\leq} \sqrt{\langle A^{1/2}u,\,A^{1/2}u\rangle}\sqrt{\langle A^{1/2}v,\,A^{1/2}v\rangle}\\
            & = \sqrt{\langle u,\,Au\rangle}\sqrt{\langle v,\,Av\rangle}
        \end{aligned}
    \end{equation}
    where in $(i)$ we have used $\Re(z) \leq \abs{z}$ for any $z\in\C$, in $(ii)$ we have used that $A^{1/2}$ is self-adjoint and in $(iii)$  we have used the Cauchy-Schwarz inequality. Combining this with~\eqref{eq:main-lemma-intermediate-inequality-I} and dividing each side by $\sqrt{\langle v,\,Av\rangle}$ yields
    \begin{equation}
        \label{eq:main-lemma-intermediate-inequality-II}
        \sqrt{\langle u,\,Au\rangle} \geq \frac{\abs{\langle u,\,v\rangle} \sqrt{\langle v,\,Av\rangle}}{\langle v,\,v\rangle} - \frac{\sqrt{\left(\langle u,\,u\rangle\langle v,\,v\rangle - \abs{\langle u,\,v\rangle}^2\right)\Delta}}{\langle v,\,v\rangle\sqrt{\langle v,\,Av\rangle}}
    \end{equation}
    The RHS in this inequality is non-negative as long as
    \begin{equation}
        \abs{\langle u,\,v\rangle} \geq \frac{\sqrt{\left(\langle u,\,u\rangle\langle v,\,v\rangle - \abs{\langle u,\,v\rangle}^2\right)\Delta}}{\langle v,\,Av\rangle}.
    \end{equation}
    Thus, in this case, squaring both sides of~\eqref{eq:main-lemma-intermediate-inequality-II} yields
    \begin{equation}
        \langle u,\,Au\rangle \geq
        \frac{\abs{\langle u,\,v\rangle}^2 \langle v,\,Av\rangle}{\langle v,\,v\rangle^2}
        - 2 \frac{\abs{\langle u,\,v\rangle}\sqrt{\left(\langle u,\,u\rangle\langle v,\,v\rangle - \abs{\langle u,\,v\rangle}^2\right)\Delta}}{\langle v,\,v\rangle^2}
        + \frac{\left(\langle u,\,u\rangle\langle v,\,v\rangle - \abs{\langle u,\,v\rangle}^2\right)\Delta}{\langle v,\,v\rangle^2 \langle v,\,Av\rangle}
    \end{equation}
    which is the desired result.
\end{proof}
We will now show how Lemma~\ref{lem:auxiliary-inner-product-bound} can be used to upper bound the worst-case risk~\eqref{eq:worst-case-risk}. Let $Q\in\cP(\cZ)$ be an arbitrary probability measure on $\cZ$ with $H(P,\,Q) \leq \rho$. Denote by $\psi_P,\,\psi_Q$ the positive square roots of the Radon-Nikodyim derivatives of $P$ and $Q$, respectively, with respect to an arbitrary measure $\mu$ with $P,\,Q \ll \mu$\footnote{Such a measure $\mu$ always exists as one can choose $\mu = P + Q$.}
\begin{equation}
    \psi_P := \sqrt{\frac{dP}{d\mu}}\hspace{2em}\text{and}\hspace{2em}\psi_Q := \sqrt{\frac{dQ}{d\mu}}.
\end{equation}
Note that $\psi_P$ and $\psi_Q$ are square-integrable with respect to $\mu$ and real-valued, $\psi_P,\,\psi_Q\in L_2(\cZ,\,\Sigma,\mu)$ where we set $\Sigma$ to be the Borel $\sigma$-algebra on $\cZ$ and assume that $L_2$ contains only real-valued functions. It is well known that $L_2$ together with the inner product $\langle f,\,g\rangle_{L_2} := \int_{\cZ}fg\,d\mu$ is a Hilbert space. Furthermore, the space of essentially bounded functions $L_{\infty}(\cZ,\,\Sigma,\mu)$ is isometrically isomorphic to the set of linear bounded operators on $\cZ$. That is, each $f\in L_\infty$ defines a linear operator $M_f$ via pointwise mulitplication, $L_2 \ni \psi \mapsto M_f \psi$ with $M_f\psi: z \mapsto (M_f\psi)(z) = f(z)\cdot\psi(z)$. It follows that for any $f \in L_\infty$, we can write its expectation with respect to $P$ (and equivalently $Q$) in terms of the inner product on $L_2$
\begin{equation}
    \bE_{Z\sim P}[f(Z)] = \int_{\cZ}f(z)\,dP(z) = \int_{\cZ} f(z)\frac{dP}{d\mu}(z)\,d\mu(z) = \int_\cZ \psi_P(z)f(z)\psi_P(z)\,d\mu(z) = \langle \psi_P,\,M_f\psi_P\rangle_{L_2}
\end{equation}
Similarly, we can write the variance of $f(Z)$ with respect to $P$ (and equivalently $Q$) in terms of inner products as
\begin{equation}
    \bV_{Z\sim P}[f(Z)] = \langle \psi_P,\,M_{f^2}\psi_P\rangle_{L_2} - \langle \psi_P,\,M_{f}\psi_P\rangle_{L_2}^2.
\end{equation}
To simplify notation, we write $f \cdot \psi$ for the image of $\psi$ under $M_f$ for $f\in L_\infty$ and we drop the subscript in the inner product whenever it is clear from context. Recall that $M$ is an upper bound on the loss function $\ell$, so that $\sup_{z\in\cZ}\abs{\ell(z)} \leq M$. It follows that the function $f_\ell(\cdot) := M - \ell(\cdot)$ is essentially bounded with respect to $\mu$ and hence defines a bounded linear operator (which is also self-adjoint since we only consider real-valued functions in this work). Applying Lemma~\ref{lem:auxiliary-inner-product-bound} to the Hilbert space $L_2$ and identifying $u \equiv \psi_Q$, $v\equiv\psi_P$ and $A$ with the operator defined by $f_\ell$ immediately yields the lower bound
\begin{equation}
    \label{eq:main-proof-lemma-instantiate}
    \begin{aligned}
        \bE_Q[M - \ell(Z)] \, &\geq \, \abs{\langle \psi_P,\,\psi_Q\rangle}^2\bE_P[M - \ell(Z)]\\
        &\hspace{-2em} - 2 \abs{\langle \psi_P,\,\psi_Q\rangle}\sqrt{(1 - \abs{\langle \psi_P,\,\psi_Q\rangle}^2)\bV_P[M - \ell(Z)]} + \frac{(1 - \abs{\langle \psi_P,\,\psi_Q\rangle}^2)\bV_P[M - \ell(Z)]}{\bE_P[M - \ell(Z)]}
    \end{aligned}
\end{equation}
Rearranging terms and noting that $\bV_P[M - \ell(Z)] = \bV_P[\ell(Z)]$ leads to
\begin{equation}
    \label{eq:main-proof-before-hellinger}
    \begin{aligned}
        \bE_Q[\ell(Z)] \, &\leq \, \abs{\langle \psi_P,\,\psi_Q\rangle}^2\bE_P[\ell(Z) - M] + M\\
        &\hspace{-2em} + 2\abs{\langle \psi_P,\,\psi_Q\rangle}\sqrt{(1 - \abs{\langle \psi_P,\,\psi_Q\rangle}^2)\bV_P[\ell(Z)]} - \frac{(1 - \abs{\langle \psi_P,\,\psi_Q\rangle}^2)\bV_P[\ell(Z)]}{\bE_P[M - \ell(Z)]}.
    \end{aligned}
\end{equation}
Note that the inner product $\langle \psi_P,\,\psi_Q\rangle$ is known as the Hellinger affinity and related to the squared Hellinger distance between $P$ and $Q$ via
\begin{equation}
    \begin{aligned}
        H^2(P,\,Q) &= \frac{1}{2}\int_{\cZ}\left(\psi_P - \psi_Q\right)^2\,\,d\mu = 1 - \int_{\cZ}\psi_P\psi_Q\,d\mu = 1 - \langle \psi_P,\,\psi_Q\rangle.
    \end{aligned}
\end{equation}
Thus, the requirement that the inner product $\langle\psi_P,\,\psi_Q\rangle$ satisfies is lower bounded by the quantity in~\eqref{eq:overlap-condition} can be expressed as an upper bound on $\rho^2$ as
\begin{equation}
    \rho^2 \leq 1 - \left[ 1 + \left(\frac{M - \bE_P[\ell(Z)]}{\sqrt{\bV_P[\ell(Z)]}}\right)^2\right]^{-1/2}.
\end{equation}
Finally, substituting $\langle\psi_P,\,\psi_Q\rangle = 1 - H^2(P,\,Q) = 1 - \rho^2$ in~\eqref{eq:main-proof-before-hellinger}, setting $C_\rho = \sqrt{\rho^2(2-\rho^2)(1-\rho^2)^2}$ and rearranging terms yields
\begin{equation}
    \begin{aligned}
        \bE_Q[\ell(Z)] &\,\leq \, \bE_P[\ell(Z)] + 2C_\rho\sqrt{\bV_P[\ell(Z)]}\\
        &\hspace{4em}+ \rho^2(2-\rho^2)\left[M - \bE_P[\ell(Z)] - \frac{\bV_P[\ell(Z)]}{\bE_P[M - \ell(Z)]}\right].
    \end{aligned}
\end{equation}
Since the choice of $Q$ was arbitrary and the RHS in this inequality does not depend on $Q$, taking the supremum of the LHS over all $Q$ with $H(P,\,Q) \leq \rho$ gives the desired result.

\subsection{A lower bound version of Theorem~\ref{thm:main}}
\label{apx:proof-lower-bound}
Given the proof of Theorem~\ref{thm:main}, it is straightforward to adapt it so as to yield a \emph{lower} bound on expectation values using Lemma~\ref{lem:auxiliary-inner-product-bound}. Indeed, by instantiating this Lemma with the function $\ell$ (instead of $f_\ell(\cdot):=M - \ell(\cdot)$) we obtain a lower bound by following the analogous, subsequent reasoning as in the proof of Theorem~\ref{thm:main}.
\begin{theorem}[Lower bound]
    \label{thm:main-lower-bound}
    Let $\ell\colon\cZ \to \R_+$ be a nonnegative function taking values in $\cZ$. Then, for any probability measure $P$ on $\cZ$ and $\rho > 0$ we have
    \begin{equation}
        \begin{aligned}
            &\inf_{Q \in B_\rho(P)} \bE_{Q}[\ell(Z)] \geq \bE_{P}[\ell(Z)] - 2C_\rho\sqrt{\bV_{P}[\ell(Z)]} - \rho^2(2-\rho^2)\bigg[\bE_{P}[\ell(Z)] - \frac{\bV_{P}[\ell(Z)]}{\bE_{P}[\ell(Z)]}\bigg]
        \end{aligned}
    \end{equation}
    where $C_\rho = \sqrt{\rho^2(1-\rho^2)^2(2-\rho^2)}$ and $B_\rho(P) = \{Q \in \cP(\cZ)\colon \, H(P,\,Q) \leq \rho\}$ is the Hellinger ball of radius $\rho$ centered at $P$. The radius $\rho$ is required to be small enough such that
    \begin{equation}
        \rho^2 \leq 1 - \left[1 + \frac{\bE_{P}[\ell(Z)]^2}{\bV_{P}[\ell(Z)]}\right]^{-1/2}.
    \end{equation}
\end{theorem}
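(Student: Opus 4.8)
The plan is to mirror the proof of Theorem~\ref{thm:main} almost verbatim, with the only modification being the choice of operator fed into Lemma~\ref{lem:auxiliary-inner-product-bound}. As before, fix an arbitrary $Q\in B_\rho(P)$, pick a dominating measure $\mu = P+Q$, and set $\psi_P = \sqrt{dP/d\mu}$, $\psi_Q = \sqrt{dQ/d\mu}$ in the Hilbert space $L_2(\cZ,\Sigma,\mu)$. Since $\ell\colon\cZ\to\R_+$ is nonnegative and (implicitly) essentially bounded with respect to $\mu$, the multiplication operator $M_\ell$ is a positive semidefinite bounded self-adjoint operator, so it is a legitimate choice for the operator $A$ in the Lemma, whereas in the upper-bound proof one used $A = M_{M-\ell}$ instead. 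With the identifications $u\equiv\psi_Q$, $v\equiv\psi_P$, $A\equiv M_\ell$, the Lemma directly yields
\begin{equation}
    \label{eq:lb-proof-instantiate}
    \begin{aligned}
        \bE_Q[\ell(Z)] \, &\geq \, \abs{\langle \psi_P,\,\psi_Q\rangle}^2\bE_P[\ell(Z)]
        - 2 \abs{\langle \psi_P,\,\psi_Q\rangle}\sqrt{(1 - \abs{\langle \psi_P,\,\psi_Q\rangle}^2)\bV_P[\ell(Z)]}\\
        &\hspace{10em} + \frac{(1 - \abs{\langle \psi_P,\,\psi_Q\rangle}^2)\bV_P[\ell(Z)]}{\bE_P[\ell(Z)]},
    \end{aligned}
\end{equation}
using $\langle\psi_P,\psi_P\rangle = 1$ and the identities $\bE_P[\ell(Z)] = \langle\psi_P,M_\ell\psi_P\rangle$, $\bV_P[\ell(Z)] = \langle\psi_P,M_{\ell^2}\psi_P\rangle - \langle\psi_P,M_\ell\psi_P\rangle^2$, exactly as established in the excerpt. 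Note that here the variance no longer needs the shift identity $\bV_P[M-\ell]=\bV_P[\ell]$ — it appears directly — and the denominator is $\bE_P[\ell(Z)]$ rather than $\bE_P[M-\ell(Z)]$, which is precisely the asymmetry between the two bounds.

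Next I would translate the overlap condition~\eqref{eq:overlap-condition} into a constraint on $\rho$. Since $\langle\psi_P,\psi_Q\rangle = 1 - H^2(P,Q)$, and with $A = M_\ell$ one computes $\Delta = \langle v,v\rangle\langle Av,Av\rangle - \langle Av,v\rangle^2 = \bV_P[\ell(Z)]$ and $\langle v,Av\rangle = \bE_P[\ell(Z)]$, condition~\eqref{eq:overlap-condition} becomes $\abs{\langle\psi_P,\psi_Q\rangle}^2 / (1 - \abs{\langle\psi_P,\psi_Q\rangle}^2) \geq \bV_P[\ell(Z)]/\bE_P[\ell(Z)]^2$, i.e. $\langle\psi_P,\psi_Q\rangle^2 \geq (1 + \bE_P[\ell(Z)]^2/\bV_P[\ell(Z)])^{-1}$, which rearranges to the stated bound $\rho^2 \leq 1 - [1 + \bE_P[\ell(Z)]^2/\bV_P[\ell(Z)]]^{-1/2}$ after taking square roots (using $\langle\psi_P,\psi_Q\rangle\in[0,1]$). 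Finally, substituting $\langle\psi_P,\psi_Q\rangle = 1-\rho^2$ and $1 - \abs{\langle\psi_P,\psi_Q\rangle}^2 = 1-(1-\rho^2)^2 = \rho^2(2-\rho^2)$ into~\eqref{eq:lb-proof-instantiate}, setting $C_\rho = \sqrt{\rho^2(1-\rho^2)^2(2-\rho^2)}$, and regrouping, I would obtain
\begin{equation}
    \bE_Q[\ell(Z)] \geq \bE_P[\ell(Z)] - 2C_\rho\sqrt{\bV_P[\ell(Z)]} - \rho^2(2-\rho^2)\left[\bE_P[\ell(Z)] - \frac{\bV_P[\ell(Z)]}{\bE_P[\ell(Z)]}\right].
\end{equation}
Since the right-hand side is independent of $Q$, taking the infimum over all $Q\in B_\rho(P)$ gives the claim.

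Honestly there is no real obstacle here: the entire content was already developed in the proof of Theorem~\ref{thm:main} and in Lemma~\ref{lem:auxiliary-inner-product-bound}, and the lower-bound version is obtained by the purely mechanical substitution $A = M_{M-\ell} \rightsquigarrow A = M_\ell$ together with dropping the reflection $\ell\mapsto M-\ell$. The one point that warrants a sentence of care is verifying that $M_\ell$ is genuinely positive semidefinite and bounded so that Lemma~\ref{lem:auxiliary-inner-product-bound} applies — boundedness requires $\ell\in L_\infty(\mu)$, which holds under the (again implicit) assumption that $\ell$ is bounded, and positivity is immediate from $\ell\geq 0$ since $\langle g, M_\ell g\rangle = \int \ell g^2\,d\mu \geq 0$. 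Everything else is the same algebra as before.
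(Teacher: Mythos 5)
Your proposal is correct and matches the paper's own argument, which likewise obtains the lower bound by instantiating Lemma~\ref{lem:auxiliary-inner-product-bound} with the multiplication operator for $\ell$ itself (rather than $M-\ell$) and then repeating the same substitutions $\langle\psi_P,\psi_Q\rangle = 1-\rho^2$, $\Delta = \bV_P[\ell(Z)]$, and the overlap-condition rearrangement. Your extra remark that $M_\ell$ must be bounded and positive for the lemma to apply is a reasonable point of care, consistent with the paper's implicit boundedness assumption.
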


\section{Finite Sampling Errors}
\label{apx:finite-sampling}
Here we explain the reasoning behind the finite-sampling version of our main Theorem stated in Corollary~\ref{cor:finite-sampling-bound}.
Let us first recall a version of Hoeffding's inequality, formulated in terms of our setting.
\begin{theorem}[\cite{hoeffding1963probability}]
    Let $Z_1,\,\ldots,\,Z_n$ be independent random variables drawn from $P$ and taking values in $\cZ$. Let $\ell\colon\cZ \to [0,\,M]$ be a loss function and let $\hat{L}_n:=\frac{1}{n}\sum_{i=1}^n\ell(Z_i)$ be the mean under the empirical distribution $\hat{P}_n$.
    Then, for $\delta > 0$, with probability at least $1 - \delta$,
    \begin{equation}
        \bE_P[\ell(Z)] \leq \hat{L}_n + M\sqrt{\frac{\ln 1/\delta}{2n}}.
    \end{equation}
\end{theorem}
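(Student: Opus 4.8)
The statement is the classical one-sided Hoeffding inequality, specialized to the bounded random variables $X_i := \ell(Z_i) \in [0,M]$, so the plan is simply to run the Chernoff method. Write $\mu := \bE_P[\ell(Z)]$ and fix $t > 0$. The target intermediate estimate is the tail bound $\bP\!\left(\mu - \hat{L}_n \geq t\right) \leq \exp\!\left(-2nt^2/M^2\right)$; once this is established, setting the right-hand side equal to $\delta$ and solving for $t = M\sqrt{\ln(1/\delta)/(2n)}$ shows that the complementary event, on which $\mu \leq \hat{L}_n + M\sqrt{\ln(1/\delta)/(2n)}$, occurs with probability at least $1 - \delta$.

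First I would apply the exponential Markov inequality together with independence of the $Z_i$: for every $s > 0$,
\[
\bP\!\left(\mu - \hat{L}_n \geq t\right) \leq e^{-st}\,\bE\!\left[e^{s(\mu - \hat{L}_n)}\right] = e^{-st}\prod_{i=1}^{n}\bE\!\left[e^{(s/n)(\mu - \ell(Z_i))}\right].
\]
Second, I would invoke Hoeffding's lemma: any mean-zero random variable $Y$ with $Y \in [a,b]$ almost surely satisfies $\bE[e^{sY}] \leq \exp\!\left(s^2(b-a)^2/8\right)$. Applied to $Y = \mu - \ell(Z_i)$, which is centered and confined to an interval of length $M$, each factor above is at most $\exp\!\left(s^2 M^2/(8n^2)\right)$, so that
\[
\bP\!\left(\mu - \hat{L}_n \geq t\right) \leq \exp\!\left(-st + \frac{s^2 M^2}{8n}\right).
\]
Third, I would minimize the exponent over $s > 0$: the optimum is $s^\star = 4nt/M^2$, which produces exactly the claimed tail bound $\exp\!\left(-2nt^2/M^2\right)$, and choosing $t$ so that $2nt^2/M^2 = \ln(1/\delta)$ finishes the proof.

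The only part that is not a purely mechanical application of the Chernoff method is Hoeffding's lemma itself. I would either cite it or include its short proof, which uses convexity of $x \mapsto e^{sx}$ to reduce to the two-point distribution on $\{a,b\}$ with the prescribed mean, followed by a second-order Taylor expansion of the cumulant generating function $\psi(s) = \log\bE[e^{sY}]$, noting $\psi(0) = \psi'(0) = 0$ and bounding $\psi''(s) \leq (b-a)^2/4$ uniformly in $s$ (a variance estimate for a variable supported in $[a,b]$). No other step presents any real obstacle, and the argument is entirely standard.
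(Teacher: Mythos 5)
Your proof is correct: the exponential Markov bound plus Hoeffding's lemma applied to each centered term $\mu - \ell(Z_i)$ (an interval of length $M$) gives the one-sided tail bound $\exp(-2nt^2/M^2)$ for $\mu - \hat{L}_n$, and inverting at $t = M\sqrt{\ln(1/\delta)/(2n)}$ yields exactly the stated high-probability inequality. The paper itself offers no proof of this statement — it is quoted as Hoeffding's inequality with a citation to Hoeffding (1963) — so your Chernoff-method derivation is simply the standard proof of the cited classical result.
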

We remark that one could in principle different concentration inequalities at this stage which can potentially improve upon Hoeffding's inequality.
For example,~\cite{maurer2009empirical} present a finite sampling version of Bennett's inequality which is known to be an improvement over Hoeffding's inequality in the low variance regime.
We leave such considerations for interesting future work.
Recall that the certificate~\eqref{eq:main-generalization-bound} is monotonically increasing in the variance. For this reason, we are interested in an upper bound on the population variance which can be computed from finite samples. To achieve this, we use the variance bound presented in Theorem 10 in~\cite{maurer2009empirical} which we state here for completeness and adapt it to our use case.
\begin{theorem}[\cite{maurer2009empirical}, Theorem 10]
    Let $Z_1,\,\ldots,\,Z_n$ be independent random variables drawn from $P$ and taking values in $\cZ$. For a loss function $\ell\colon\cZ \to [0,\,M]$, let $S_n^2:=\frac{1}{n(n-1)}\sum_{1 \leq i < j \leq n}^n (\ell(Z_i) - \ell(Z_j))^2$ be the unbiased estimator of the variance of the random variable $\ell(Z)$, $Z\sim P$. Then, for $\delta > 0$, with probability at least $1 - \delta$,
    \begin{equation}
        \sqrt{\bV_P[\ell(Z)]} \leq \sqrt{S_n^2} + M\sqrt{\frac{2\ln 1/\delta}{n - 1}}
    \end{equation}
\end{theorem}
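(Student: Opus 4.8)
This is Theorem~10 of \cite{maurer2009empirical}, which I would reconstruct as a lower-tail concentration statement for the sample standard deviation $\hat\sigma_n := \sqrt{S_n^2}$, viewed as a function of the bounded random vector $\mathbf{a} := (\ell(Z_1),\dots,\ell(Z_n)) \in [0,M]^n$. The first step is the elementary identity $\tfrac{1}{n(n-1)}\sum_{i<j}(a_i - a_j)^2 = \tfrac{1}{n-1}\sum_{i}(a_i - \bar a)^2$, which lets us write $\hat\sigma_n(\mathbf{a}) = \tfrac{1}{\sqrt{n-1}}\,\|\mathbf{a} - \bar a\,\mathbf{1}\|_2$, i.e.\ $\tfrac{1}{\sqrt{n-1}}$ times the Euclidean distance from $\mathbf{a}$ to the line $\mathrm{span}(\mathbf{1})$. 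As a distance to a linear subspace, $\sqrt{n-1}\,\hat\sigma_n$ is \emph{convex} and $1$-Lipschitz with respect to the Euclidean norm on $\R^n$; in particular changing one coordinate of $\mathbf{a}$ changes $\hat\sigma_n$ by at most $M/\sqrt{n-1}$.

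The second step is to concentrate $\hat\sigma_n$ around its expectation from below. McDiarmid's bounded-differences inequality with the per-coordinate constants $M/\sqrt{n-1}$ is \emph{not} sharp enough (it only controls $\hat\sigma_n$ at the $n$-independent scale $\sim M$), so instead I would invoke the concentration of convex Lipschitz functions of bounded independent coordinates --- a consequence of Talagrand's convex-distance inequality. Applied to $\sqrt{n-1}\,\hat\sigma_n$ on $[0,M]^n$, it gives for every $\delta \in (0,1)$, with probability at least $1-\delta$,
\[
    \sqrt{S_n^2} \;\ge\; \bE\big[\sqrt{S_n^2}\,\big] \,-\, M\sqrt{\tfrac{2\ln(1/\delta)}{n-1}},
\]
which already has exactly the error term appearing in the statement.

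The third step replaces $\bE[\sqrt{S_n^2}]$ by the population standard deviation using unbiasedness, $\bE[S_n^2] = \bV_P[\ell(Z)]$. Since $\bE[\sqrt{S_n^2}]^2 = \bV_P[\ell(Z)] - \mathrm{Var}(\sqrt{S_n^2})$, we have $\bE[\sqrt{S_n^2}] \ge \sqrt{\bV_P[\ell(Z)]} - \sqrt{\mathrm{Var}(\sqrt{S_n^2})}$, and $\mathrm{Var}(\sqrt{S_n^2})$ is controlled via the Efron--Stein inequality together with a bound of the form $\mathrm{Var}(S_n^2) \le M^2\bV_P[\ell(Z)]/(n-1)$ (using that the fourth central moment of $\ell(Z)$ is at most $M^2$ times its variance), plus the observation that when $\bV_P[\ell(Z)] \lesssim M^2\ln(1/\delta)/n$ the assertion is trivial since then $\sqrt{\bV_P[\ell(Z)]} \le M/2$ by Popoviciu's inequality. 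Chaining this with the display above and rearranging yields $\sqrt{\bV_P[\ell(Z)]} \le \sqrt{S_n^2} + M\sqrt{2\ln(1/\delta)/(n-1)}$.

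The main obstacle lies precisely in the third step: by Jensen's inequality $\bE[\sqrt{S_n^2}] \le \sqrt{\bV_P[\ell(Z)]}$, so the bias runs in the \emph{wrong} direction and one must show quantitatively that the Jensen gap $\sqrt{\bV_P[\ell(Z)]} - \bE[\sqrt{S_n^2}]$ is of lower order and, more delicately, combine it with the concentration error so that the final estimate retains the single clean term $M\sqrt{2\ln(1/\delta)/(n-1)}$ with constant exactly $1$ rather than $1 + o(1)$; this bookkeeping --- not the concentration itself --- is where \cite{maurer2009empirical} do the careful work, via a self-bounding / Bennett-type refinement of the above.
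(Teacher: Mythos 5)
There is a genuine gap, and it is worth noting first that the paper itself does not prove this statement: it is imported verbatim (rescaled from $[0,1]$ to $[0,M]$) from Theorem~10 of Maurer and Pontil, so the relevant comparison is with that proof. Your steps 1--2 (the identity $S_n^2=\frac{1}{n-1}\sum_i(\ell(Z_i)-\bar\ell)^2$ and the convex $1$-Lipschitz structure of $\sqrt{(n-1)S_n^2}$) are fine as far as they go, but the plan collapses at step 3, and not merely for bookkeeping reasons. Since $\bE[\sqrt{S_n^2}]\le\sigma:=\sqrt{\bV_P[\ell(Z)]}$ by Jensen, any argument that concentrates $\sqrt{S_n^2}$ around \emph{its own expectation} must additionally pay the gap $\sigma-\bE[\sqrt{S_n^2}]$, and this gap is generically of order $M/\sqrt{n}$ (e.g.\ $\mathrm{Var}(\sqrt{S_n^2})\approx\mathrm{Var}(S_n^2)/(4\sigma^2)$ can be of order $M^2/n$), i.e.\ of the \emph{same} order in $n$ as the target term $M\sqrt{2\ln(1/\delta)/(n-1)}$. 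It therefore cannot be absorbed while keeping the constant exactly $1$; the Popoviciu reduction only disposes of the tiny-variance regime, not of the generic regime where gap and error term are comparable. In addition, the Talagrand convex-distance route gives two-sided concentration around a median (or, after conversion, around the mean with degraded constants), so even the claim that step 2 already produces exactly the error term $M\sqrt{2\ln(1/\delta)/(n-1)}$ is not justified.

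The cited proof avoids all of this by never comparing $\sqrt{S_n^2}$ to its expectation. Maurer and Pontil observe that $(n-1)S_n^2/M^2$ is a \emph{self-bounding} function of $Z_1,\dots,Z_n$, and the lower-tail concentration inequality for self-bounding functions (entropy method) yields, with probability at least $1-\delta$,
\begin{equation*}
    \sigma^2 - S_n^2 \;\le\; M\,\sigma\,\sqrt{\frac{2\ln(1/\delta)}{n-1}} .
\end{equation*}
Treating this as a quadratic inequality in $\sigma$, namely $\sigma^2 - a\sigma - S_n^2\le 0$ with $a=M\sqrt{2\ln(1/\delta)/(n-1)}$, gives $\sigma\le\frac{1}{2}\bigl(a+\sqrt{a^2+4S_n^2}\bigr)\le \sqrt{S_n^2}+a$, which is the statement with the clean constant. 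So describing their argument as a ``self-bounding/Bennett-type refinement'' of your approach is inaccurate: it is a structurally different route, working at the level of the variance plus a square-root/quadratic trick, and it is precisely what makes the constant $1$ attainable. Your convex-Lipschitz observation does yield a valid deviation bound for $\sqrt{S_n^2}$ of the same shape with worse constants, but it does not prove the theorem as stated.
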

Finally, we employ the union bound to upper bound both expectation and variance simultaneously with high probability. Thus, for any $\delta > 0$, we have with probability at least $1-\delta$
\begin{equation}
    \begin{gathered}
        \bE_P[\ell(Z)] \leq \hat{L}_n + M\sqrt{\frac{\ln 2/\delta}{2n}}, \,\\
        \sqrt{\bV_P[\ell(Z)]} \leq \sqrt{S_n^2} + M\sqrt{\frac{2\ln 2/\delta}{n - 1}}.
    \end{gathered}
\end{equation}
Finally, plugging in these upper bounds for the population quantities in Theorem~\ref{thm:main} leads to the desired finite sampling bound.
Getting the finite sampling version of the lower bound in Theorem~\ref{thm:main-lower-bound} is analogous by using the corresponding lower bound variant of Hoeffding, but still the same upper bound for the variance.

\section{Synthetic Dataset}
\label{apx:synthetic-dataset}
We consider a binary classification task with covariates $X\in \R^2$ and labels $Y \in {\pm 1}$, where the data is distributed according to the Gaussian mixture
\begin{equation}
    X\lvert\,Y=y \sim \cN(y\cdot \mu,\,\Id_2).
\end{equation}
with $p(y) =\nicefrac{1}{2}$ and $\mu=(2,\,0)^T\in\R^2$. When considering the distribution shift $P \to Q$ arising from perturbations $X \mapsto X + \delta$ for a fixed $\delta\in\R^2$, both the Wasserstein distance and Hellinger distance can be evaluated as functions of the $L_2$-norm of the perturbation:
\begin{equation}
    W_2(P,\,Q) = \norm[2]{\delta},\,\,\,
    H(P,\,Q) = \sqrt{1 - e^{-\nicefrac{\norm[2]{\delta}^2}{8}}}.
\end{equation}
For our classification model, we use a small neural network with ELU activations and 2 hidden layers of size 4 and 2. The ELU activations, in combination with spectral normalization of the weights, enforce the model to be smooth and hence satisfy the assumptions required for the certificate from~\cite{sinha2017certifying}.

\section{Lipschitz Constant for Gradients of Neural Networks with Jensen-Shannon Divergence Loss}
\label{apx:wrm-comparison-lipschitz}
Let us first recall the dual reformulation of the Wasserstein worst-case risk, which is the central result that underpins the distributional robustness certificate presented in~\cite{sinha2017certifying}.
\begin{proposition}[\cite{sinha2017certifying}, Proposition 1]
    Let $\ell\colon\Theta\times \cZ \to \R$ and $c\colon\Theta\times \cZ \to \R_+$ be continuous, and let $\phi_\gamma(\theta;\,z_0):=\sup_{z\in\cZ}\{\ell(\theta;\,z) - \gamma c(z,\,z_0)\}$. Then, for any distribution $P$ and any $\rho > 0$,
    \begin{equation}
        \label{eq:wasserstein-duality}
        \sup_{Q\colon W_c(P,\,Q) \leq \rho} \bE_Q[\ell(\theta;\,Z)] = \inf_{\gamma \geq 0}\{\gamma\rho + \bE_P[\phi_\gamma(\theta;\,Z)]\}.
    \end{equation}
    where $W_c(P,\,Q) := \inf_{\pi \in \Pi(P,\,Q)}\int_{\cZ}c(z,\,z')\,d\pi(z,\,z')$ is the 1-Wasserstein distance between $P$ and $Q$.
\end{proposition}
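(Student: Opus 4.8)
The plan is to prove \eqref{eq:wasserstein-duality} by the standard Lagrangian (Kantorovich) duality argument for Wasserstein distributionally robust problems, separating the easy ``$\leq$'' (weak duality) direction from the harder ``$\geq$'' (strong duality) direction. Throughout, $\theta$ is fixed, so I treat $\ell(\theta;\,\cdot)$ and $c$ as continuous functions on $\cZ$ and $\cZ\times\cZ$ respectively; I also work in the setting in which the statement is intended, namely $\cZ$ Polish and $c$ vanishing on the diagonal.

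First I would recast the primal over transport plans. Using the coupling definition of $W_c$, the supremum over $Q$ in the Wasserstein ball equals a supremum over couplings $\pi\in\cP(\cZ\times\cZ)$ whose first marginal is $P$,
\begin{equation}
    \sup_{Q:\,W_c(P,\,Q)\leq\rho}\bE_Q[\ell(\theta;\,Z)]
    \;=\;\sup\Big\{\textstyle\int_{\cZ\times\cZ}\ell(\theta;\,z')\,d\pi(z,\,z')\ :\ \pi_1 = P,\ \textstyle\int_{\cZ\times\cZ} c\,d\pi\leq\rho\Big\},
\end{equation}
since any such $\pi$ has second marginal $Q$ with $W_c(P,\,Q)\leq\int c\,d\pi\leq\rho$, and conversely any admissible $Q$ is realized by an optimal plan. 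Disintegrating $\pi(dz,\,dz') = P(dz)\,\kappa(z,\,dz')$ and introducing a multiplier $\gamma\geq 0$ for the single inequality constraint, every admissible $\pi$ satisfies
\begin{equation}
    \int\ell(\theta;\,z')\,d\pi \;=\; \gamma\!\int c\,d\pi \;+\; \int_{\cZ}\Big(\int_{\cZ}\big(\ell(\theta;\,z') - \gamma c(z,\,z')\big)\,\kappa(z,\,dz')\Big)P(dz) \;\leq\; \gamma\rho \;+\; \bE_P\big[\phi_\gamma(\theta;\,Z)\big],
\end{equation}
using $\int c\,d\pi\leq\rho$ and the pointwise bound $\int_{\cZ}\big(\ell(\theta;\,z')-\gamma c(z,\,z')\big)\,\kappa(z,\,dz')\leq\phi_\gamma(\theta;\,z)$. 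Taking the supremum over $\pi$ and then the infimum over $\gamma\geq 0$ gives the ``$\leq$'' inequality.

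For the reverse inequality, the key point is that the inner problem over couplings with the $P$-marginal fixed is in fact \emph{exact}: $\sup\{\int(\ell-\gamma c)\,d\pi : \pi_1 = P\} = \bE_P[\phi_\gamma(\theta;\,Z)]$. This follows from a measurable-selection argument --- for $P$-a.e.\ $z$ one picks an $\varepsilon$-maximizer $z^\star_\varepsilon(z)$ of $z'\mapsto\ell(\theta;\,z') - \gamma c(z,\,z')$ in a jointly measurable way (this is where continuity of $\ell,c$ and Polishness of $\cZ$ enter), and the deterministic plan $(\mathrm{id},\,z^\star_\varepsilon)_\#P$ nearly attains the supremum. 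Granting this, the primal is a linear functional of $\pi$ maximized over the convex set cut out by the affine marginal constraint and the scalar constraint $\int c\,d\pi\leq\rho$; Slater's condition holds because the diagonal plan $(\mathrm{id},\,\mathrm{id})_\#P$ gives $\int c\,d\pi = \int c(z,\,z)\,P(dz) = 0 < \rho$, so the Lagrangian duality theorem for convex programs with a single inequality constraint yields zero duality gap, i.e.\ $\sup_\pi = \inf_{\gamma\geq 0}\{\gamma\rho + \bE_P[\phi_\gamma(\theta;\,Z)]\}$. As a bonus, since $\phi_\gamma(\theta;\,z)\geq\ell(\theta;\,z)$ the dual objective diverges as $\gamma\to\infty$, so the infimum is attained on a compact interval.

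The main obstacle is this strong-duality step, and within it the interchange of supremum and integral: one must produce a measurable selection of (near-)maximizers of $\ell(\theta;\,\cdot) - \gamma c(z,\,\cdot)$, and one must ensure $\phi_\gamma$ is measurable with $\bE_P[\phi_\gamma]$ well-defined --- finiteness, or suitable coercivity of $c(z,\,\cdot)$, is needed to rule out a pathological $+\infty$. Once those measurability points are settled, closing the gap is a routine invocation of Lagrangian convex duality under Slater's condition; the weak-duality direction, by contrast, uses nothing beyond the coupling representation of $W_c$ and disintegration.
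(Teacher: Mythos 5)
This proposition is not proved in the paper at all: it is quoted verbatim from \cite{sinha2017certifying} (their Proposition~1) and used only as the basis for the baseline Wasserstein certificate in the appendix comparison, so there is no in-paper proof to measure you against. Judged on its own terms, your argument is the standard Kantorovich/Lagrangian duality route and is essentially the same strategy as the proof in the cited source: rewrite the ball constraint via couplings with first marginal $P$, get weak duality by disintegration and the pointwise bound $\int(\ell(\theta;\,z')-\gamma c(z,\,z'))\,\kappa(z,\,dz')\leq\phi_\gamma(\theta;\,z)$, and close the gap by (i) the exact interchange $\sup_{\pi_1=P}\int(\ell-\gamma c)\,d\pi=\bE_P[\phi_\gamma(\theta;\,Z)]$, justified by a measurable-selection / normal-integrand argument, and (ii) convex duality for the single scalar constraint under Slater's condition, which holds because the diagonal coupling has zero cost. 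You correctly locate where the real work lies — measurability of $\phi_\gamma$ (it is a supremum of continuous functions of $z_0$, hence lower semicontinuous), the measurable choice of $\varepsilon$-maximizers, and properness of the value function — and these are exactly the points the original proof spends its effort on.

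Two small caveats. First, the statement as transcribed in the paper contains a typo ($c$ should be a cost on $\cZ\times\cZ$ with $c(z,\,z)=0$, not a map on $\Theta\times\cZ$); your reading is the intended one, and your Slater step genuinely needs $c$ to vanish on the diagonal, so that hypothesis should be stated rather than only assumed implicitly. Second, your strong-duality step should say a word about the case where the primal value is infinite (weak duality already forces both sides to be $+\infty$, and concavity of the value function $r\mapsto v(r)$ shows finiteness at one interior radius propagates), and the claim that the infimum over $\gamma$ is attained requires $\bE_P[\ell(\theta;\,Z)]>-\infty$; neither issue affects the validity of the identity itself.
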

From this result,~\cite{sinha2017certifying} derive a robustness certificate which can be instantiated to hold uniformly over a function of families parametrized by $\theta\in\Theta$, but also a certificate that holds pointwise, that is, for a single model $\ell(\theta_0;\,\cdot)$. One requirement for this certificate to be tractable is that the surrogate function $\phi_\gamma$ be concave in $z$. As shown in~\cite{sinha2017certifying}, this is the case when $\gamma$ is larger than the Lipschitz constant $L$ of the gradient of $\ell$ with respect to $z$.
Thus one needs to compute $L$ and choose $\gamma \geq L$ so that the inner maximization in~\eqref{eq:wasserstein-duality} is guaranteed to converge and hence a robustness certificate can be calculated.

Here, we present the calculation of the Lipschitz constant for the gradient of the Jensen-Shannon divergence loss with respect to input features. For the remainder of this section, we set $\cZ = \cX \times \cY$ with a binary label space $\abs{\cY} = C = 2$.
We will always write vectors in bold roman letters, for example $\pvec = (p_1,\,\ldots,\,p_C)\in\R^C$ and $\evec_y\in\R^C$ denotes a standard basis vector with zeros everywhere except $1$ at position $y$.
We consider a feedforward neural network with $L$ layers and ELU activation functions, denoted by $\sigma$:
\begin{equation}
    F_L(\theta;\,x) := \sigma(\theta_l\cdot\sigma_{L-1}(\theta_{L-1} \cdots \sigma(\theta_1\cdot x) \cdots ))
\end{equation}
and we are interested in calculating $L>0$ such that
\begin{equation}
    \norm[*]{\nabla \ell(F_L(\theta;\,x),\,y) - \nabla \ell(F_L(\theta;\,x'),\,y)} \leq L \norm[2]{x - x'}
\end{equation}
where the gradient is taken with respect to $x$ and where $\ell$ is the Jensen-Shannon divergence. To achieve this, we apply Proposition 5 in~\cite{sinha2017certifying} which states that the Jacobian of $F_L$ is $\beta_L(\theta)$-Lipschitz with respect to the operator norm induced by $\norm[2]{\cdot}$ with
\begin{equation}
  \beta_l(\theta) = \alpha_l(\theta)\sum_{j=1}^l\left\{\frac{L_j^1}{(L_j^0)^2}\alpha_j(\theta)\right\},
  \hspace{3em}
  \alpha_l(\theta) = \prod_{j=1}^{l}L_j^0\norm[op]{\theta_j}.
\end{equation}
where $L^0_j$ is the Lipschitz constant of each activation function $\sigma_j$ and $L_j^1$ is the Lipschitz constant of its Jacobian. It is useful to write this recursively as
\begin{equation}
  \begin{aligned}
    \alpha_{l+1}(\theta) &= L^0_{l+1} \norm[op]{\theta_{l+1}}\alpha_l(\theta),\\
    \beta_{l+1}(\theta) &= L_{l+1}^0 \norm[op]{\theta_{l+1}}\beta_l(\theta) + L^1_{l+1}\norm[op]{\theta_{l+1}}^2\alpha_l(\theta)^2
  \end{aligned}
\end{equation}
In our case, since we have ELU activations, we have $L_j^0 = L_j^1 = 1$ for all $j$ (\cite{sinha2017certifying}, example 3). Finally, viewing $\ell(\pvec,\,\evec_y)$ as an $L + 1$ layer neural network with a single output dimension, we have that $\nabla_z\ell(\pvec(x),\,y)$ is $L^*$-Lipschitz continuous with constant
\begin{equation}
  L^* = L^0_{L+1}\beta_L(\theta) + L_{L+1}^1\alpha_L(\theta)^2
\end{equation}
where we have used that $\norm[op]{\theta_{L+1}} = \norm[op]{\Id} = 1$ and where $L^0_{L+1}$ is the Lipschitz constant of the function $z \mapsto \ell(\pvec(z),\,y)$ and $L^1_{L+1}$ is the Lipschitz constant of $z \mapsto \nabla_z\ell(\pvec(z),\,y)$ and $\pvec(z)$ is the softmax probability vector
\begin{equation}
  \pvec(z) = \left(\frac{e^{z_1}}{\sum_j e^{z_j}},\,\ldots,\,\frac{e^{z_C}}{\sum_j e^{z_j}}\right)^T \in\R^C.
\end{equation}
We now show the calculation of $L^0_{L+1}$ and $L^1_{L+1}$.
Fix $z\in\R^C$ and $y\in\cY$, and let $\evec_y\R^K$ be the one hot encoded label vector with zero everywhere except at position $y$.
The Jensen-Shannon divergence loss between a vector of predicted class probabilities $\pvec$ and the class label $\evec_y$ is given by
\begin{equation}
    \ell(\pvec,\,\evec_y)= \frac{1}{2}\left(D_{KL}(\pvec\|\,\mathbf{m}) + D_{KL}(\evec_y\|\,\mathbf{m})\right)
\end{equation}
with $\mathbf{m} = \frac{1}{2}(\pvec + \evec_y)$.
The Kullback Leibler divergences are
\begin{equation}
  \begin{aligned}
    D_{KL}(\pvec\|\,\mathbf{m}) &= 1 + p_y\log\left(\frac{p_y}{1 + p_y}\right)\\
    D_{KL}(\evec_y\|\,\mathbf{m}) &= 1 + \log\left(\frac{1}{1 + p_y}\right)
  \end{aligned}
\end{equation}
where $\log = \log_2$ is the logarithm with base $2$.
The Jensen-Shannon divergence loss is thus given by
\begin{equation}
  \ell(\pvec,\,\evec_y) = 1 + \frac{1}{2}\left(p_y\log(p_y) - (1 + p_y)\log(1+p_y)\right).
\end{equation}
The gradient $\nabla_z \ell(\phat,\,e_y)$ of the loss with respect to the input $x$ is given by
\begin{equation}
  \begin{aligned}
    \nabla_z\ell(\pvec,\,\evec_y) &= \frac{1}{2}\nabla_z\left(p_y\log(p_y) - (1+p_y)\log(1+p_y)\right)\\
    &= \frac{1}{2}\nabla_z\left(p_y\log(p_y)\right) - \frac{1}{2}\nabla_z\left((1+p_y)\log(1+p_y)\right)\\
    &=\frac{1}{2}(1 + \log(p_y))\nabla_z p_y - \frac{1}{2}(1 + \log(1+p_y))\nabla_z p_y
  \end{aligned}
\end{equation}
Noting that
\begin{equation}
  \nabla_z p_y(x) = p_y (\evec_y - \pvec)
\end{equation}
yields the expression
\begin{equation}
  \nabla_z\ell(\pvec,\,\evec_y) = \frac{1}{2}\log\left(\frac{p_y}{1+p_y}\right)p_y(\evec_y - \pvec).
\end{equation}
Thus,
\begin{equation}
  \begin{aligned}
    L^0_{L+1} &= \sup_z\norm[2]{\nabla_z\ell(\pvec,\,\evec_y)} = \frac{1}{2}\sup_z\left(\log\left(\frac{1+p_y}{p_y}\right)p_y\norm[2]{\evec_y - \pvec}\right)\\
    &= \sup_z\norm[2]{\nabla_z\ell(\pvec,\,\evec_y)} = \frac{1}{\sqrt{2}}\sup_z\left(\log\left(\frac{1+p_y}{p_y}\right)p_y(1-p_y)\right) \approx 0.314568.
  \end{aligned}
\end{equation}
We will now calculate $L^1_{L+1} = \sup_x\norm[2]{J}$ where $J\equiv J_{\ell(\pvec,\evec_y)}$, is the Jacobian of $\ell(\pvec,\,\evec_y)$ and $\norm[2]{J}$ is given by the largest singular value of $J$. For ease of notation, let $f_i(x) \equiv (\nabla_z\ell)_i$ and recall that $J$ is defined by
\begin{equation}
  J = \begin{pmatrix}
    \nabla_z^T f_1\\
    \vdots\\
    \nabla_z^T f_C
\end{pmatrix}.
\end{equation}
Note that
\begin{equation}
  \begin{aligned}
    \nabla_z f_i &= \nabla_z\frac{1}{2}\log\left(\frac{p_y}{1+p_y}\right)p_y(\delta_{iy} - p_i) \\
    &= \frac{1}{2}\left(\frac{1+p_y}{p_y}\left[\frac{\nabla_z p_y}{1 + p_y} - \frac{p_y}{(1+p_y)^2}\nabla_z p_y\right]\right)p_y(\delta_{iy} - p_i) + \\
    &\hspace{3em} + \frac{1}{2}\log\left(\frac{p_y}{1+p_y}\right)(\delta_{iy} - p_i)\nabla_z p_y - \frac{1}{2}\log\left(\frac{p_y}{1+p_y}\right)p_y\nabla_z p_i\\
    &= \frac{1}{2}\left(\frac{1}{1+p_y} + \log\left(\frac{p_y}{1+p_y}\right)\right)(\delta_{iy} - p_i)\nabla_z p_y -\frac{1}{2}\log\left(\frac{p_y}{1+p_y}\right)p_y\nabla_z p_i\\
  \end{aligned}
\end{equation}
and hence, using $\nabla_z p_y = p_y(\evec_y - \pvec)$,
\begin{equation}
  \nabla_z f_i = \frac{1}{2}\left(\frac{p_y}{1+p_y} + p_y\log\left(\frac{p_y}{1+p_y}\right)\right)(\delta_{iy} - p_i)(\evec_y - \pvec) - \frac{1}{2}p_y\log\left(\frac{p_y}{1+p_y}\right)p_i(\evec_i - \pvec)
\end{equation}
It follows that the Jacobian is given by
\begin{equation}
  J = \frac{1}{2}\left(\frac{p_y}{1+p_y} + p_y\log\left(\frac{p_y}{1+p_y}\right)\right) (\evec_y - \pvec)\cdot(\evec_y - \pvec)^T + \frac{1}{2}p_y\log\left(\frac{1+p_y}{p_y}\right)(\mathrm{diag}(\pvec) - \pvec\cdot\pvec^T).
\end{equation}
Since we are only interested in the binary case $C=2$, we see that
\begin{equation}
  A := (\evec_y - \pvec)\cdot(\evec_y - \pvec)^T = (1-p_y)^2\begin{pmatrix}
    1 & -1\\
    -1 & 1
    \end{pmatrix}
\end{equation}
with spectrum $\sigma(A) = \{0, 2(1-p_y)^2\}$. The eigenvalues of $\mathrm{diag}(\pvec)$ are $p_i$ and hence $\lambda(\mathrm{diag}(\pvec)) \subseteq [0, 1]$, and $\sigma(\pvec\cdot\pvec^T) = \{0, \norm[2]{\pvec}^2\}$. It follows that $\sigma(\mathrm{diag}(\pvec) - \pvec\cdot\pvec^T) \subseteq [-\norm[2]{\pvec}^2,\,1]$. Thus, by Weyl's inequality and noting that the term in front of $(\evec_y - \pvec)\cdot(\evec_y - \pvec)^T$ is always negative, we have for any eigenvalue $\lambda$ of $J$ that
\begin{equation}
  (1-p_y)^2\left(\frac{p_y}{1+p_y} + p_y\log\left(\frac{p_y}{1+p_y}\right)\right) - \frac{1}{2}p_y\log\left(\frac{1+p_y}{p_y}\right)\norm[2]{\pvec}^2 \leq \lambda
  \leq \frac{1}{2}p_y\log\left(\frac{1+p_y}{p_y}\right)
\end{equation}
Note that $J$ is symmetric, and hence its largest singular value is given by the largest absolute value of its eigenvalues. Taking the infimum (supremum) of the LHS (RHS) with respect to $z$ yields the bounds
\begin{equation}
  -\frac{1}{2} \leq \lambda \leq \frac{1}{2}
\end{equation}
and hence
\begin{equation}
  L^1_{L+1} = \sup_z\norm[2]{J} \leq \frac{1}{2}.
\end{equation}
It follows that $\nabla_x\ell(\pvec(F_L(\theta;\,x)),\,y)$ is $L^*$-Lipschitz with
\begin{equation}
  L^* = L^0_{L+1}\beta_L(\theta) + \frac{1}{2}\alpha_L(\theta)^2
\end{equation}
and $L^0_{L+1} = 0.314568$. Finally, choosing $\gamma \geq L^*$ in~\eqref{eq:wasserstein-duality} makes the objective in the surrogate loss $\phi_\gamma$ concave and hence enables the certificate
\begin{equation}
    \begin{aligned}
        \sup_{Q\colon W_c(P,\,Q) \leq \rho} \bE_Q[\ell(\theta;\,Z)] &\leq \gamma\rho + \bE_P[\phi_\gamma(\theta;\,Z)]\\
        &= \gamma\rho + \bE_{(X,\,Y)\sim P}[\sup_{x\in\cX}\ell(F_L(\theta;\,x),\,Y) - \gamma\norm[2]{x - X}^2].
    \end{aligned}
\end{equation}

\section{Hellinger distance for mixtures of distributions with disjoint support}
\label{apx:hellinger-distance-mixture}
Consider two joint (feature, label)-distributions $P,\,Q \in\cP(\cX\times\cY)$ with densities $f_P$ and $f_Q$ with respect to a suitable measure. $P$ and $Q$ have disjoint support if
\begin{equation}
    \forall\,x\in\cX,\,y\in\cY\colon \hspace{2em} f_Q(x,\,y) > 0 \iff f_P(x,\,y) = 0.
\end{equation}
In this case, for $\gamma\in(0,\,1)$, we define the mixture measure as $\Pi_\gamma := \gamma P + (1-\gamma)Q$ with density
\begin{equation}
    \pi_\gamma(x,\,y) = \gamma f_P(x,\,y) + (1-\gamma)f_Q(x,\,y).
\end{equation}
We can calculate the squared Hellinger distance between $P$ and $\Pi_\gamma$ as
\begin{equation}
    \begin{aligned}
        H^2(P,\,\Pi_\gamma) &= 1 - \int\int_{\cX\times\cY}\sqrt{f_P(x,\,y)}\sqrt{\gamma f_P(x,\,y) + (1-\gamma)f_Q(x,\,y)}\,dx\,dy\\
        &=1 - \sqrt{\gamma}\int\int_{f_p > 0}f_P(x,\,y)\sqrt{1 +  \frac{1-\gamma}{\gamma}\frac{f_Q(x,\,y)}{f_P(x,\,y)}}\,dx\,dy\\
        &=1 - \sqrt{\gamma}\int\int_{f_p > 0}f_P(x,\,y)\,dx\,dy\\
        &= 1 - \sqrt{\gamma}.
    \end{aligned}
\end{equation}

\newpage
\section{Additional Experimental Results}
\label{apx:additional-experimental-results}
\subsection{Additional Model Architectures on CIFAR-10}
\label{apx:additional-models}
Here, we present results for a diverse set of model architectures, evaluated on the CIFAR-10 dataset.
\begin{figure}[h]
\vskip 0.2in
\centering
     \begin{subfigure}[b]{0.45\columnwidth}
         \centering
         \includegraphics[width=\columnwidth]{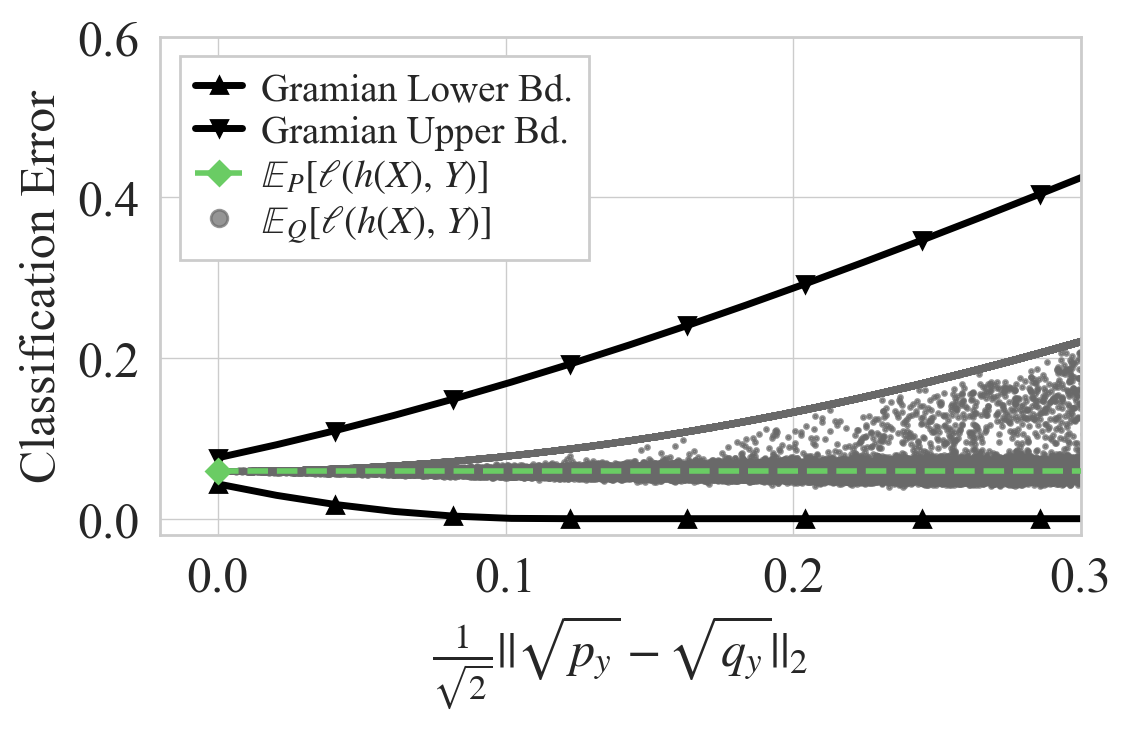}
         \caption{DenseNet-169}
     \end{subfigure}%
     \hfill
     \begin{subfigure}[b]{0.45\columnwidth}
         \centering
         \includegraphics[width=\columnwidth]{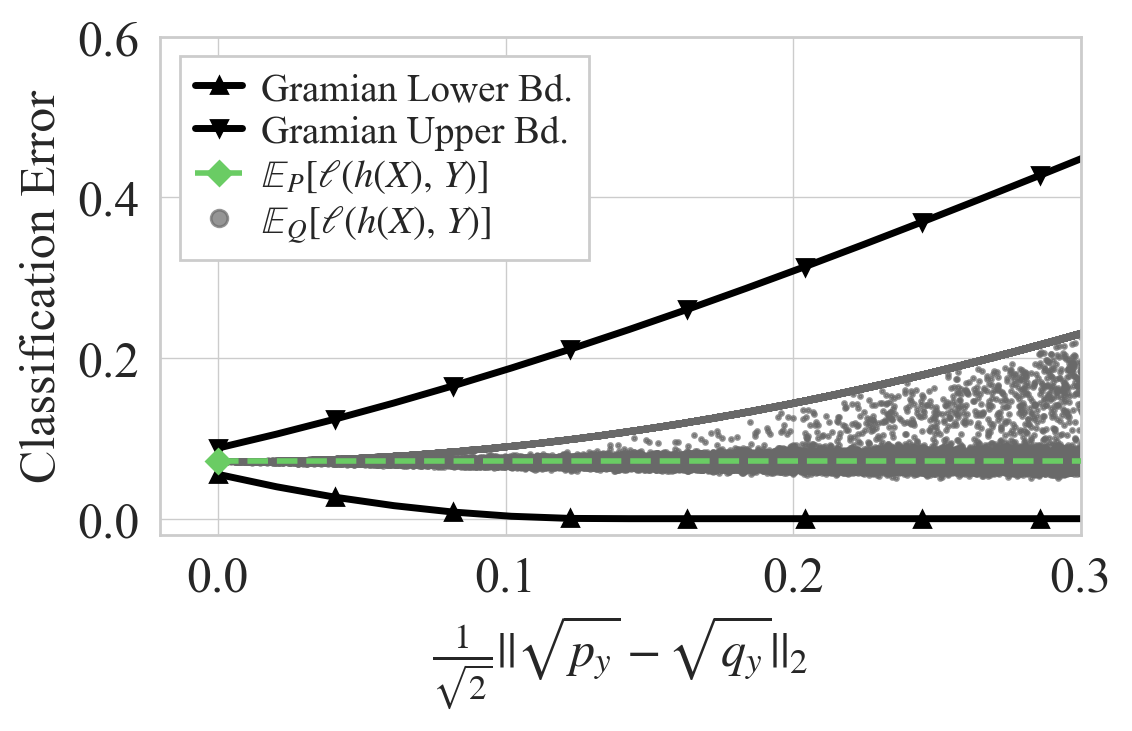}
         \caption{GoogleNet}
     \end{subfigure}
     \begin{subfigure}[b]{0.45\columnwidth}
         \centering
         \includegraphics[width=\columnwidth]{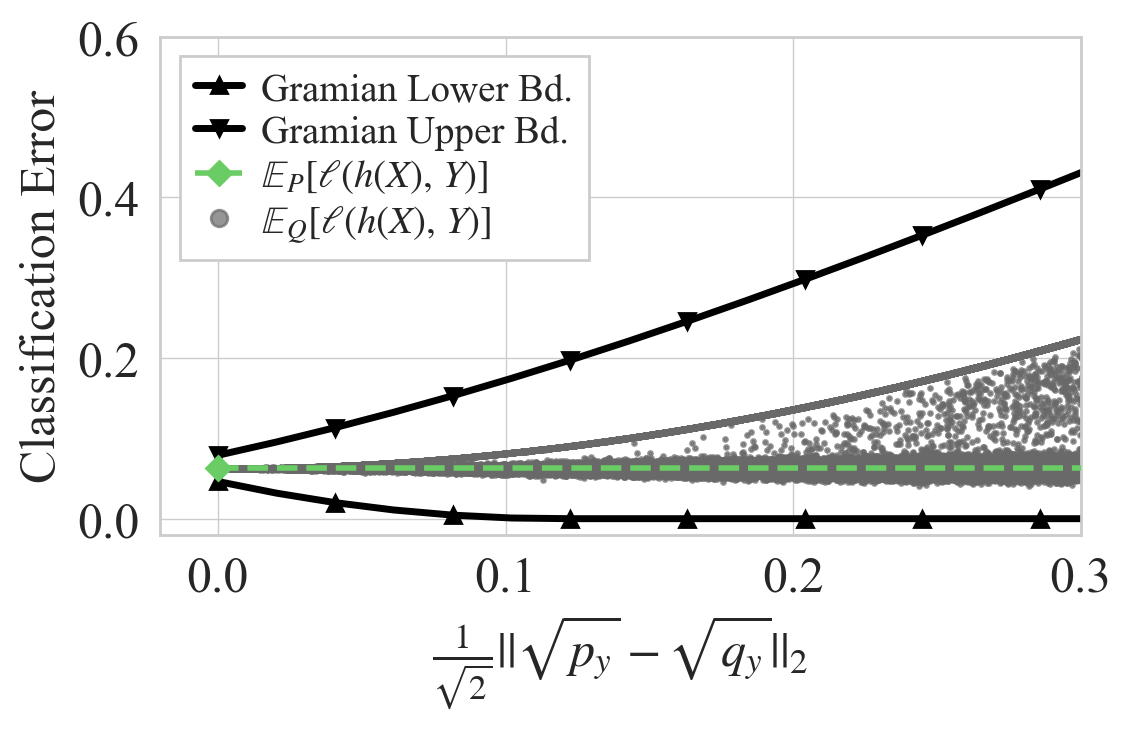}
         \caption{Inception-V3}
     \end{subfigure}%
     \hfill
     \begin{subfigure}[b]{0.45\columnwidth}
         \centering
         \includegraphics[width=\columnwidth]{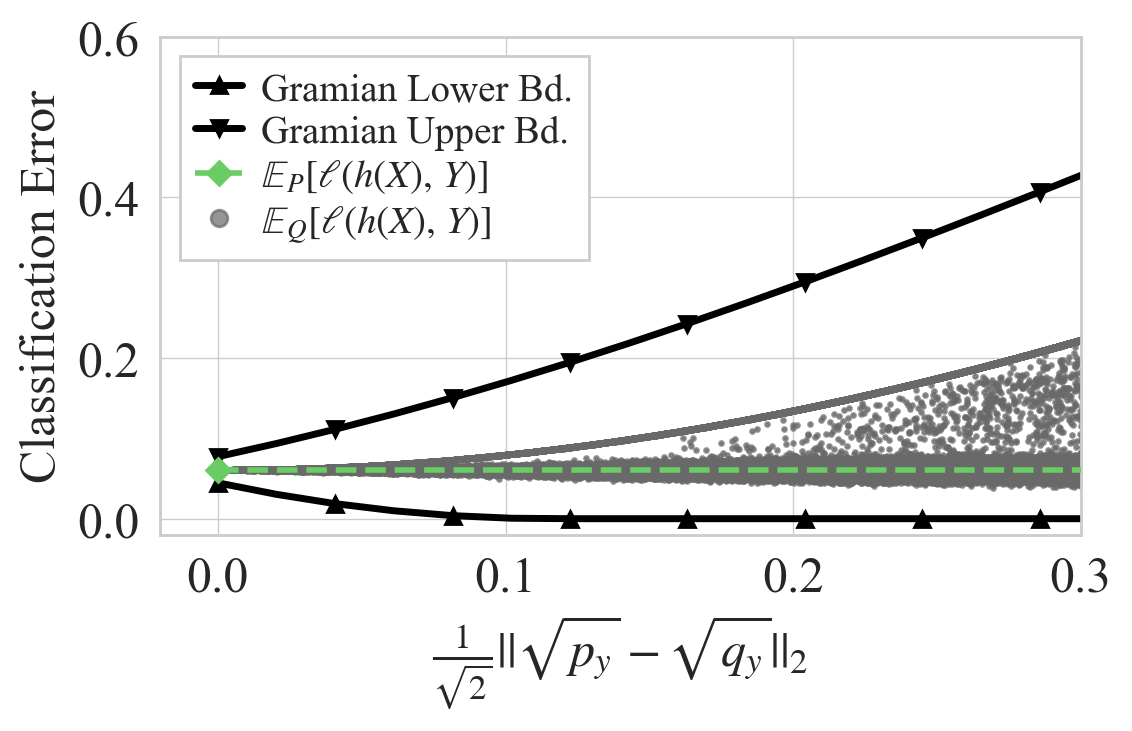}
         \caption{MobileNet-V2}
     \end{subfigure}
     \begin{subfigure}[b]{0.45\columnwidth}
         \centering
         \includegraphics[width=\columnwidth]{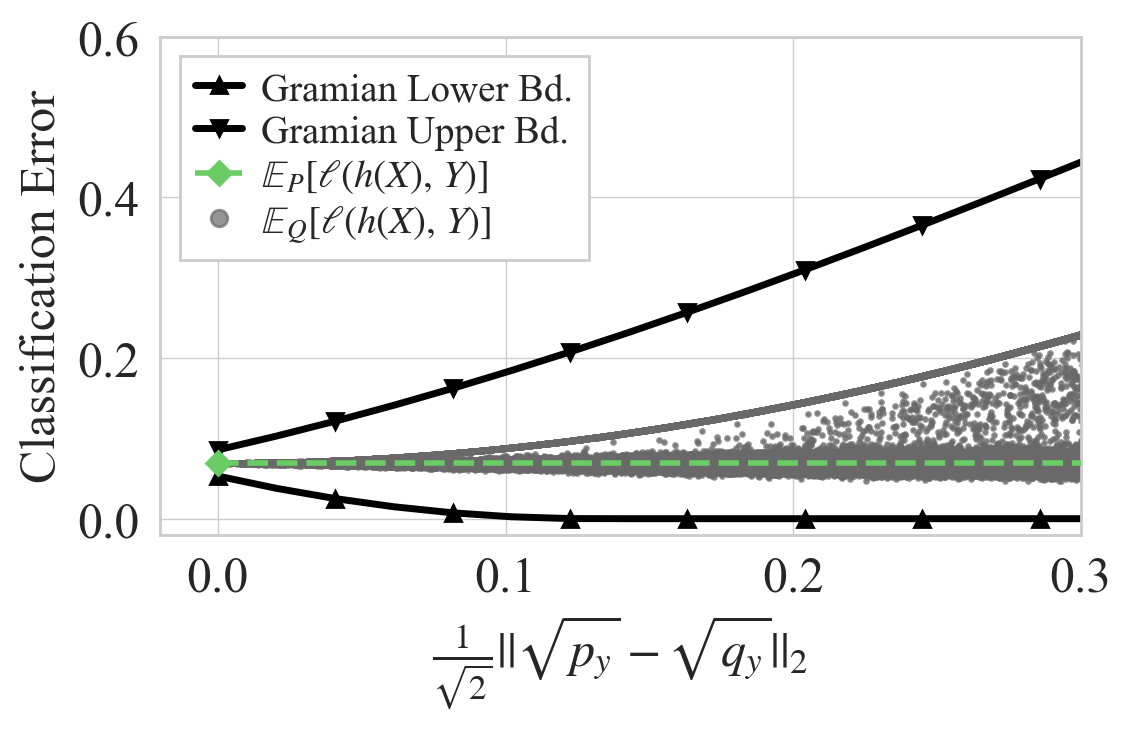}
         \caption{ResNet-18}
     \end{subfigure}%
     \hfill
     \begin{subfigure}[b]{0.45\columnwidth}
         \centering
         \includegraphics[width=\columnwidth]{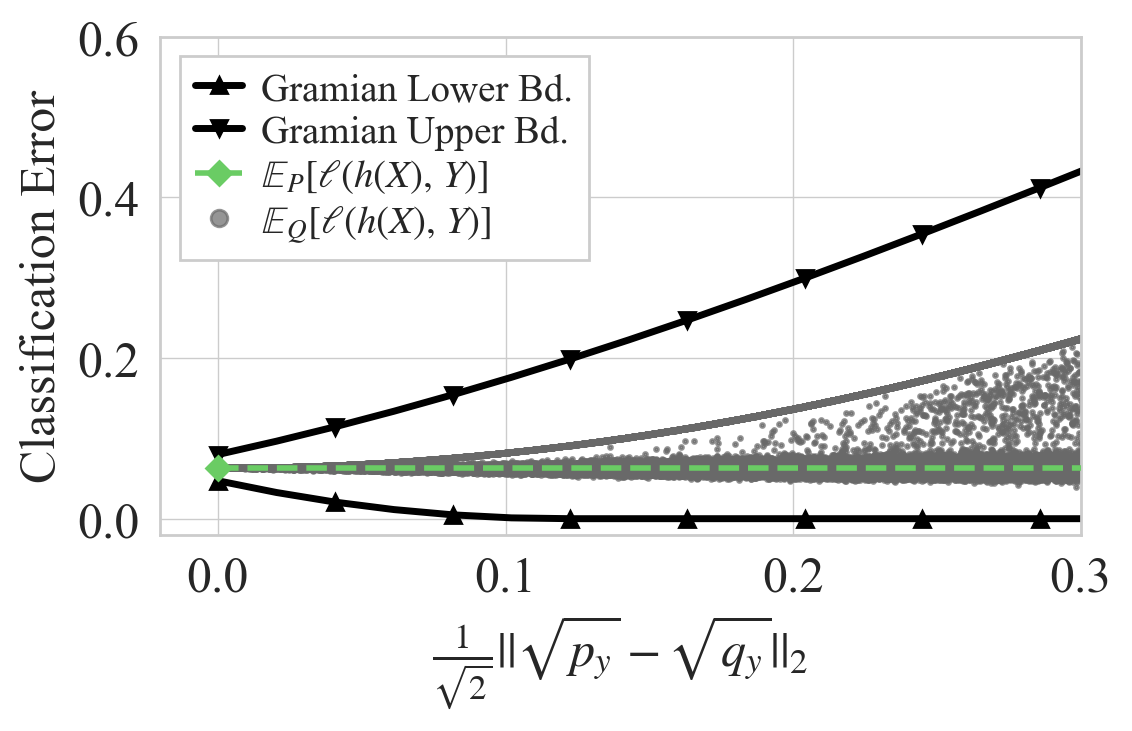}
         \caption{ResNet-50}
     \end{subfigure}
\caption{Certified classification error with label distribution shifts on CIFAR-10.}
\vskip -0.2in
\end{figure}

\begin{figure}[ht]
\vskip 0.2in
\centering
    \begin{subfigure}[b]{0.45\columnwidth}
         \centering
         \includegraphics[width=\columnwidth]{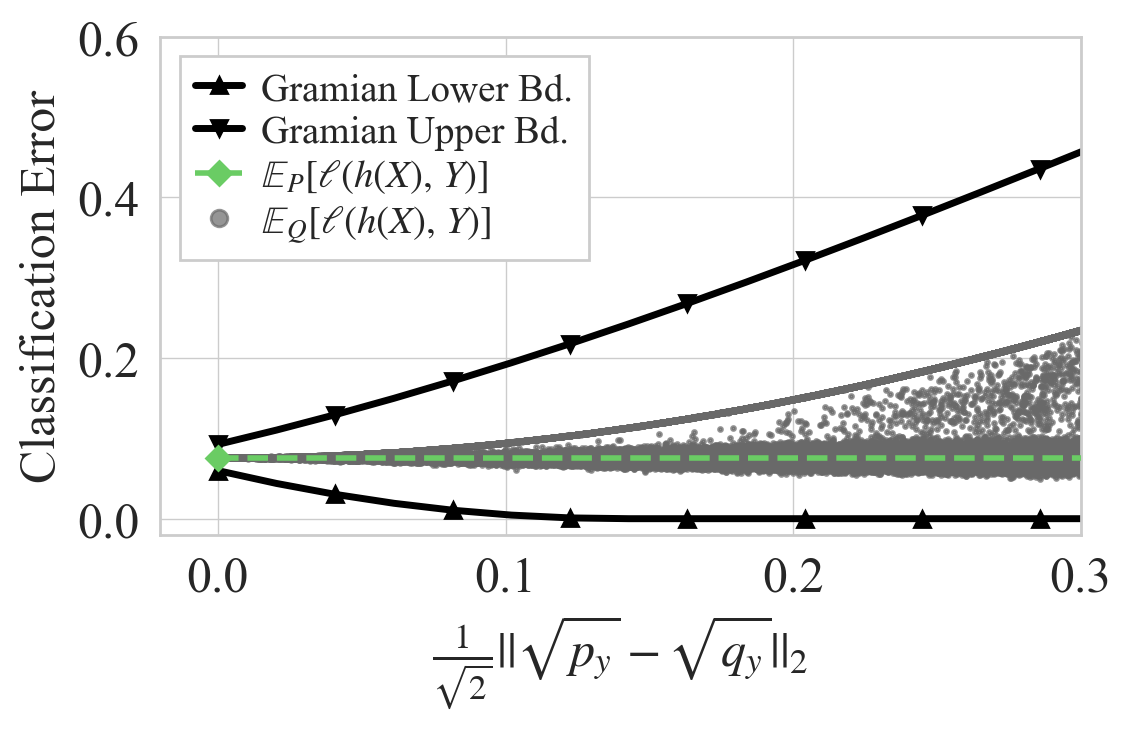}
         \caption{VGG11-BN}
     \end{subfigure}%
     \hfill
     \begin{subfigure}[b]{0.45\columnwidth}
         \centering
         \includegraphics[width=\columnwidth]{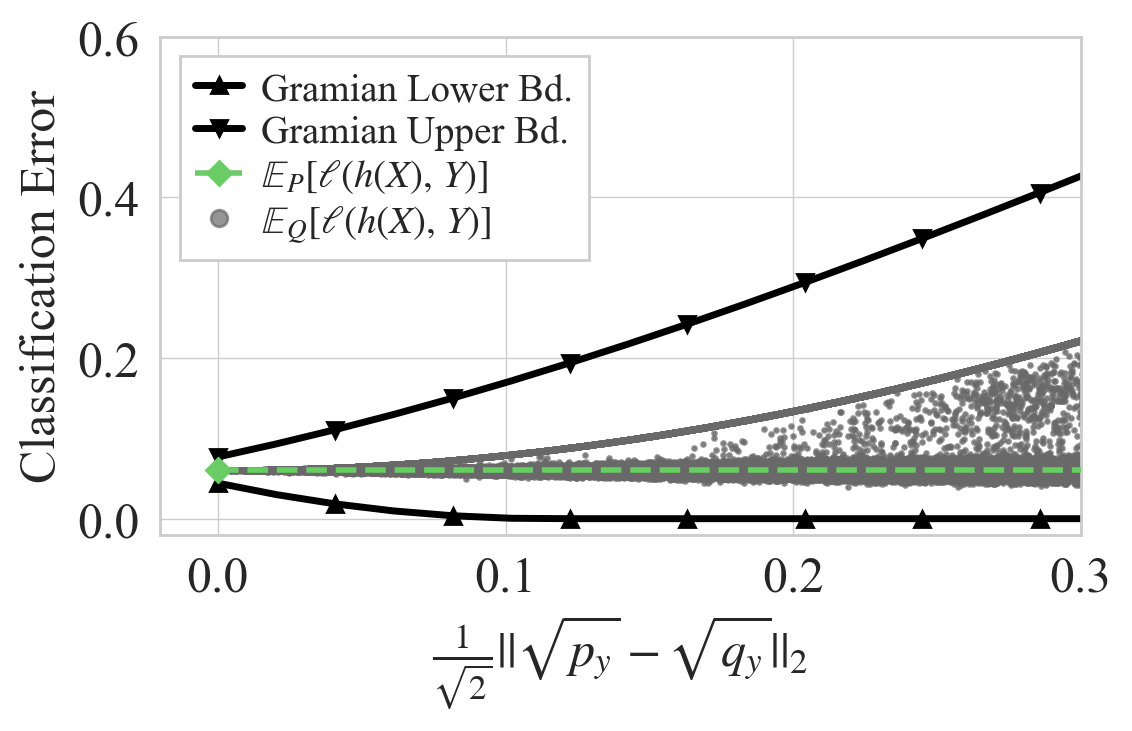}
         \caption{VGG19-BN}
     \end{subfigure}
\caption{Certified classification error with label distribution shifts on CIFAR-10.}
\vskip -0.2in
\end{figure}

\newpage
\subsection{Results for Additional Loss and Score Functions}
\label{apx:additional-functions}
Here we present additional results for JSD loss, classification error and AUC score.
\begin{figure}[h]
\vskip 0.2in
\centering
     \begin{subfigure}[b]{0.45\columnwidth}
         \centering
         \includegraphics[width=\columnwidth]{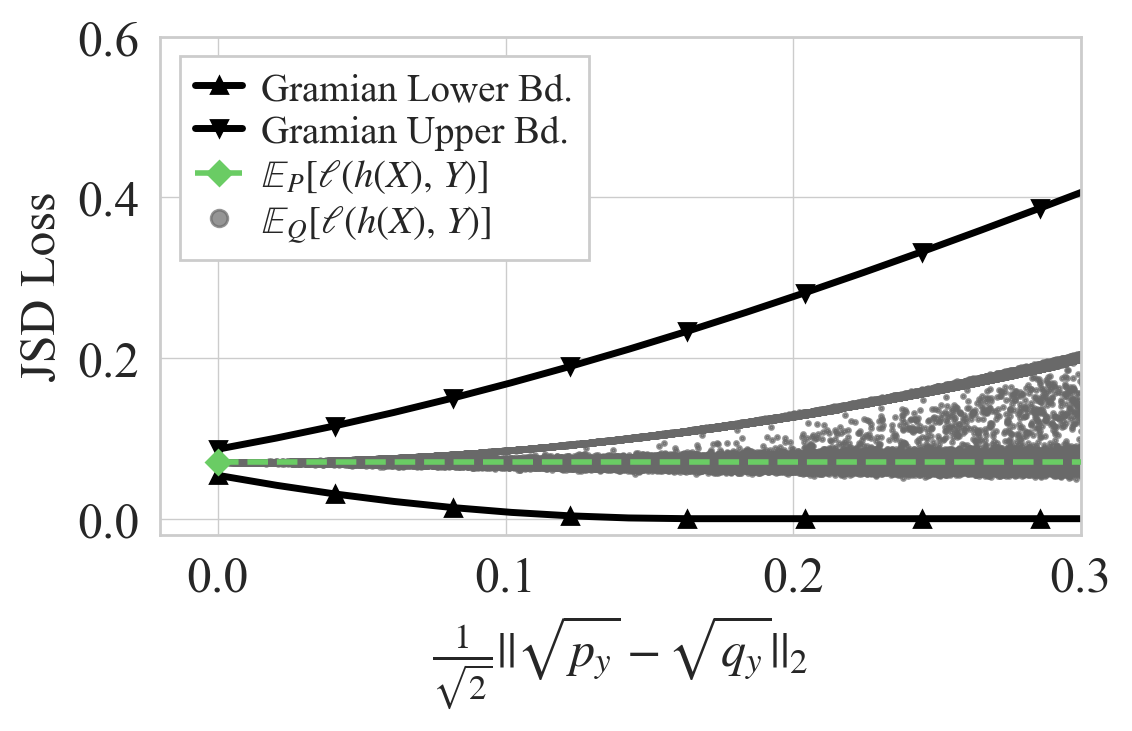}
         \caption{DenseNet-121 on CIFAR-10}
     \end{subfigure}%
     \hfill
     \begin{subfigure}[b]{0.45\columnwidth}
         \centering
         \includegraphics[width=\columnwidth]{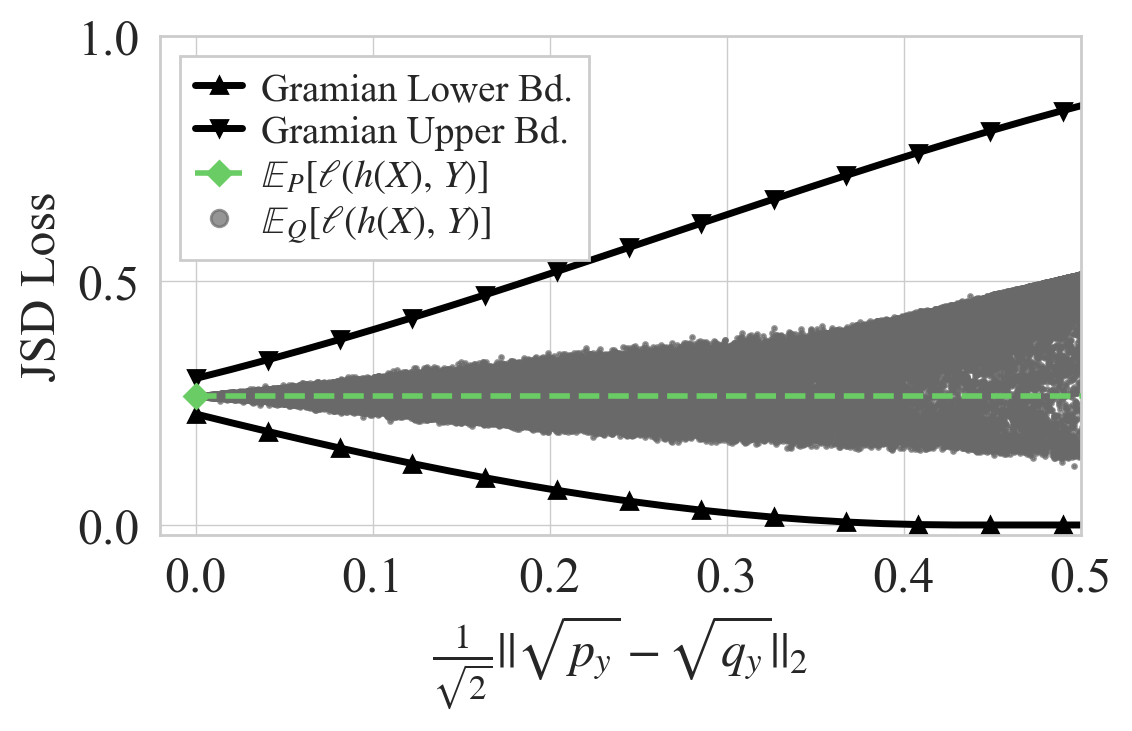}
         \caption{BERT on Yelp}
     \end{subfigure}
\caption{Certified JSD Loss with label distribution shifts.}
\vskip -0.2in
\end{figure}

\begin{figure}[h]
\vskip 0.2in
\centering
     \begin{subfigure}[b]{0.45\columnwidth}
         \centering
         \includegraphics[width=\columnwidth]{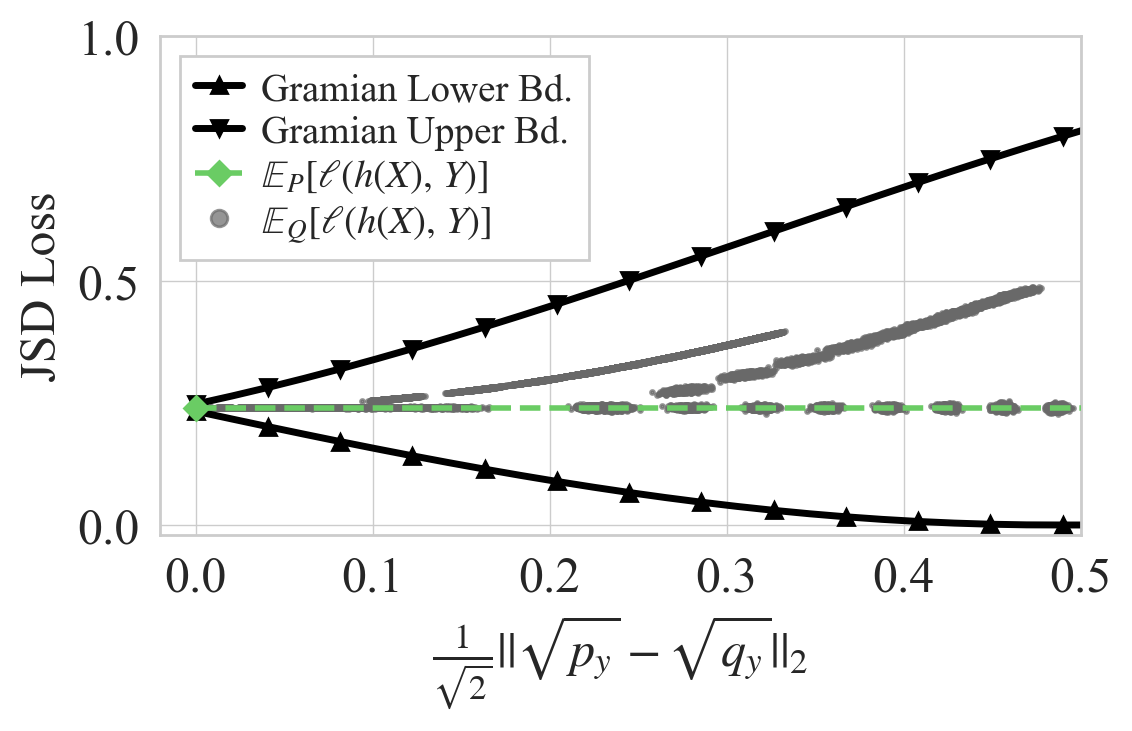}
         \caption{EfficientNet-B7 on ImageNet-1k}
     \end{subfigure}%
     \hfill
     \begin{subfigure}[b]{0.45\columnwidth}
         \centering
         \includegraphics[width=\columnwidth]{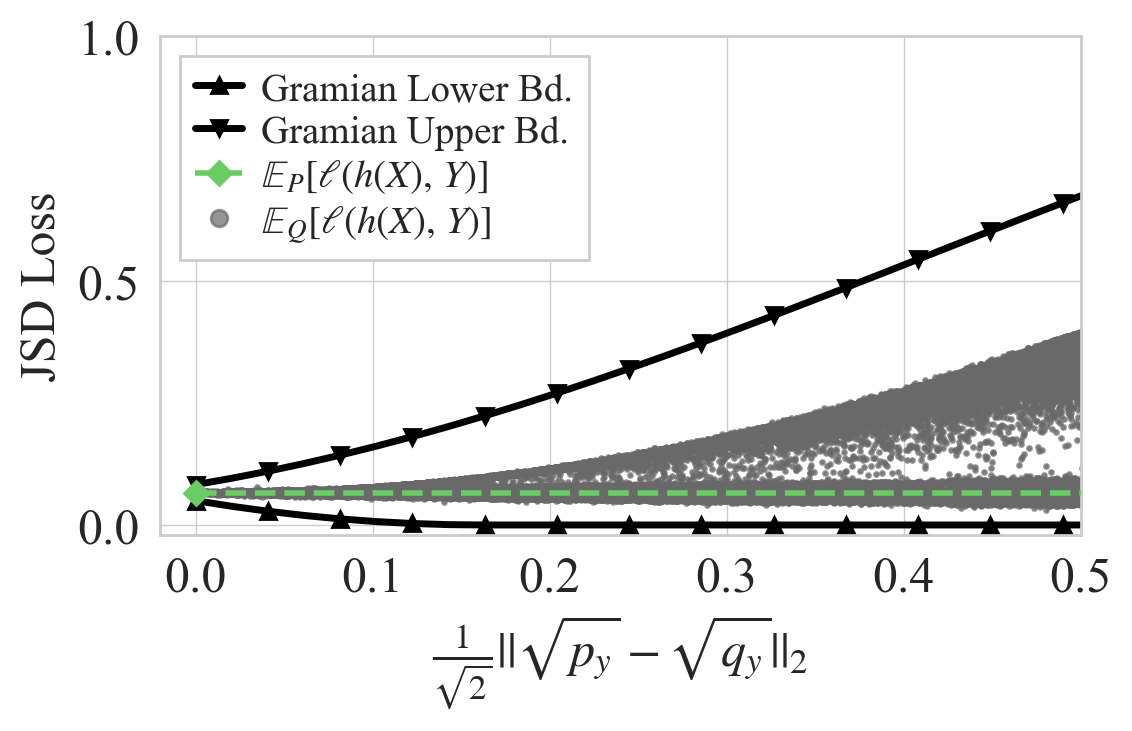}
         \caption{DeBERTa on SNLI}
     \end{subfigure}
\caption{Certified JSD Loss with label distribution shifts.}
\vskip -0.2in
\end{figure}

\begin{figure}[h]
\vskip 0.2in
\centering
     \begin{subfigure}[b]{0.45\columnwidth}
         \centering
         \includegraphics[width=\columnwidth]{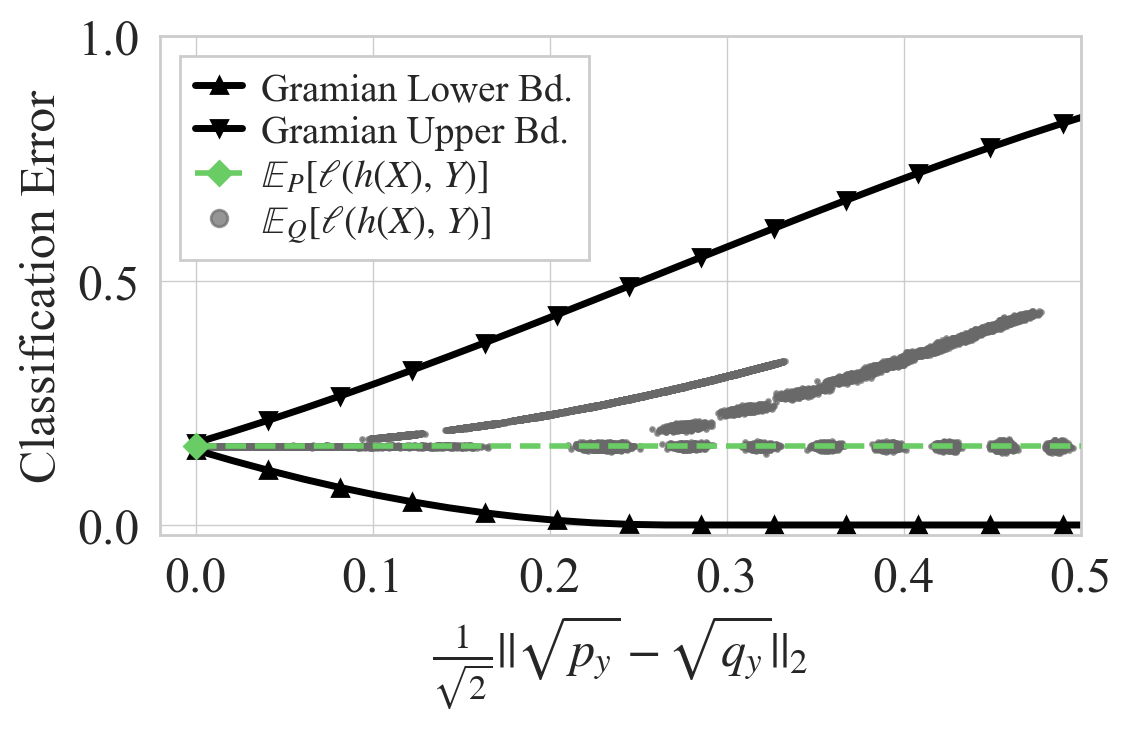}
         \caption{EfficientNet-B7 on ImageNet-1k}
     \end{subfigure}%
     \hfill
     \begin{subfigure}[b]{0.45\columnwidth}
         \centering
         \includegraphics[width=\columnwidth]{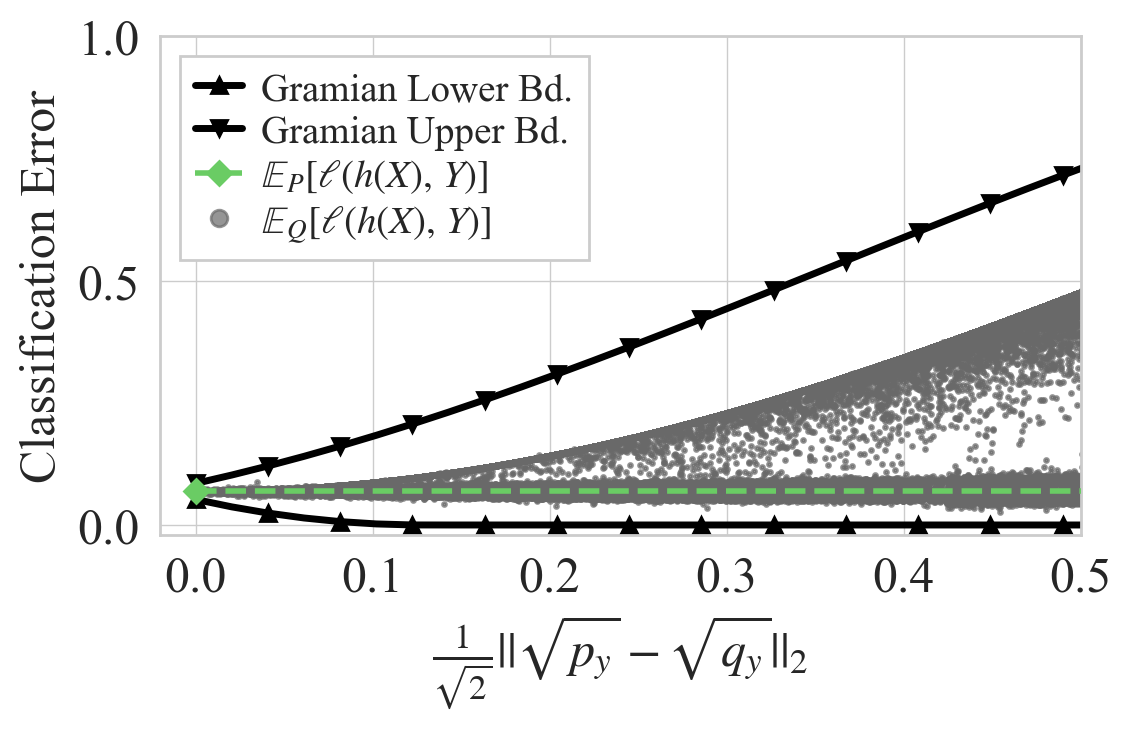}
         \caption{DeBERTa on SNLI}
     \end{subfigure}
    \caption{Certified classification error with label distribution shifts.}
\vskip -0.2in
\end{figure}

\begin{figure}[h]
\vskip 0.2in
\centering
     \includegraphics[width=.45\columnwidth]{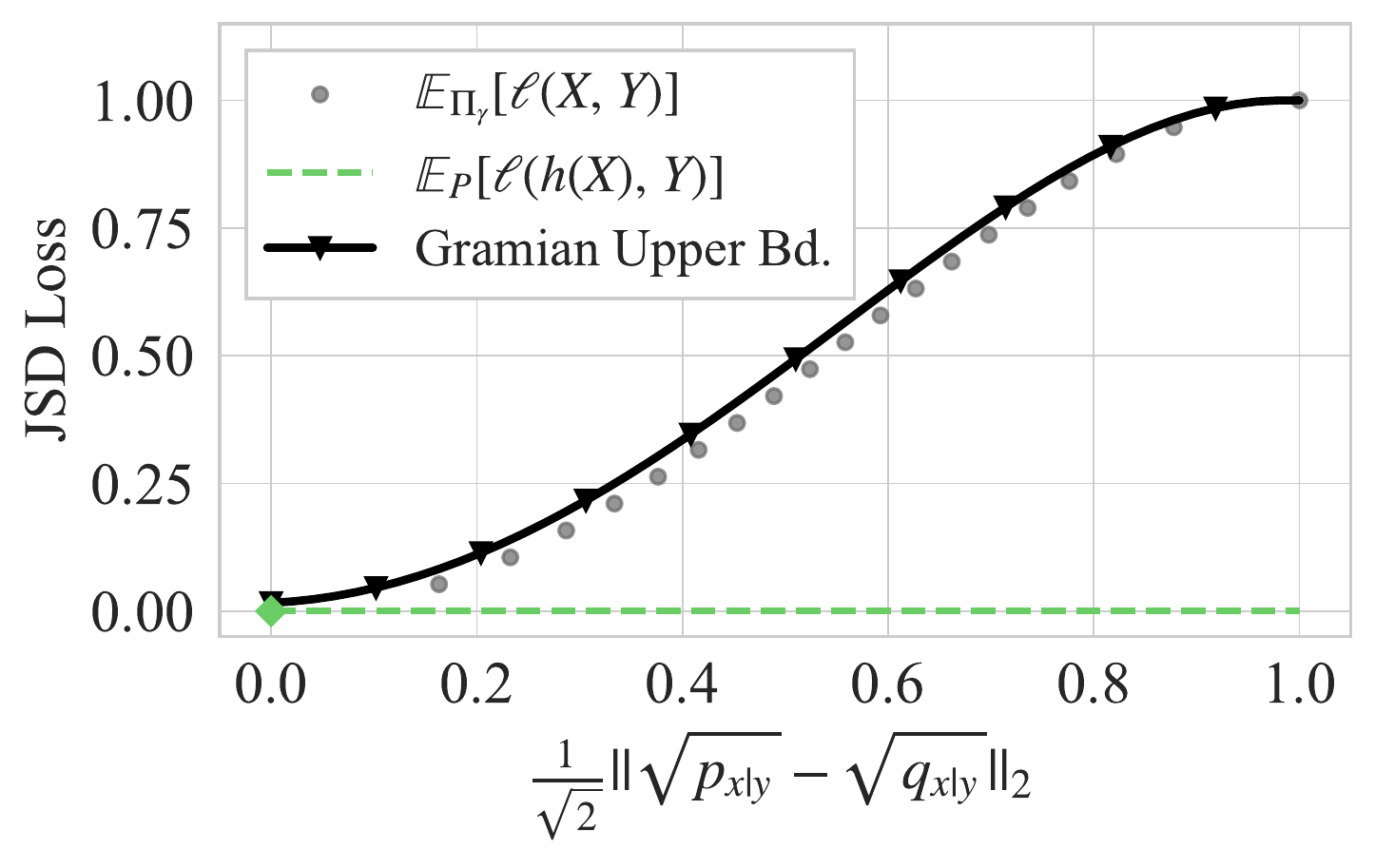}
\vspace{-2em}
\caption{Certified Jensen-Shannon divergence loss for the colored MNIST dataset.}
\vskip -0.2in
\end{figure}

\end{document}